\definecolor{mydarkblue}{rgb}{0,0.08,0.45}
\newtheorem*{definition*}{Definition}
\newtheorem*{proposition*}{Proposition}
\newtheorem*{corollary*}{Corollary}
\newtheorem{theorem}{Theorem}[section]
\newtheorem{lemma}[theorem]{Lemma}
\newtheorem{definition}[theorem]{Definition}
\newtheorem{remark}[theorem]{Remark}
\newcommand{\eat}[1]{}
\newcommand{\R}{\mathbb{R}}
\newcommand{\E}{\mathop{\mathbb{E}}}
\newcommand{\N}{\mathbb{N}}
\newcommand{\Z}{\mathbb{Z}}
\newcommand{\clip}{\mathrm{clip}}
\newcommand{\argmin}{\mathop{\arg\min}}
\newcommand{\zo}{[0,1]}
\newcommand{\bits}{\{0,1\}}
\newcommand{\ber}{{\mathsf{Ber}}}
\newcommand{\dual}{{\mathsf{dual}}}
\newcommand{\X}{\mathcal{X}}
\newcommand{\Y}{\mathcal{Y}}
\newcommand{\mC}{\mathcal{C}}
\newcommand{\mF}{\mathcal{F}}
\newcommand{\mD}{\mathcal{D}}
\newcommand{\mG}{\mathcal{G}}
\newcommand{\mJ}{\mathcal{J}}
 \newcommand{\NN}{\mathsf{NN}}
 \newcommand{\relu}{\mathsf{ReLU}}
\newcommand{\pmo}{\ensuremath{ \{\pm 1\} }}
\newcommand{\opt}{\mathrm{OPT}}
\newcommand{\junta}{{\mathcal{J}}}
\newcommand{\maj}{{\mathsf{MAJ}}}
\newcommand{\sps}[1]{^{(#1)}} %
\newcommand{\sq}{{\ell_{\mathsf{sq}}}}
\newcommand{\xent}{{\ell_{\mathsf{xent}}}}
\newcommand{\logistic}{{\ell_{\mathsf{logistic}}}}
\newcommand{\eloss}{L}
\def\shownotes{1}  
\newcommand{\pnote}[1]{\todo[linecolor=blue,backgroundcolor=cyan!25,bordercolor=blue,inline]{#1 --PG}}
\newcommand{\lunjia}[1]{\textcolor{orange}{[LH: #1]}}
\title{Loss Minimization Yields Multicalibration\\ for Large Neural Networks}
\author{
{}\and
Jaros\l aw B\l asiok\\
ETH Zürich \and
Parikshit Gopalan\\
Apple \and
Lunjia Hu\\
Stanford University \and
{}\and
Adam Tauman Kalai\\
Microsoft Research \and
Preetum Nakkiran\\
Apple 
}
\date{}
\begin{document}

\maketitle

\newcommand{\intzo}{\ensuremath{ [0,1]}}

\begin{abstract}
Multicalibration is a notion of fairness for predictors that requires them to provide calibrated predictions across a large set of protected groups. Multicalibration is known to be a distinct goal than loss minimization, even for simple predictors such as linear functions. 

In this work, we consider the setting where the protected groups can be represented by neural networks of size $k$, and the predictors are neural networks of size $n > k$. We show that minimizing the squared loss over all neural nets of size $n$ implies multicalibration for all but a bounded number of \emph{unlucky} values of $n$. We also give evidence that our bound on the number of unlucky values is tight, given our proof technique.  Previously, results of the flavor that loss minimization yields multicalibration were known only for predictors that were near the ground truth, hence were rather limited in applicability. Unlike these, our results rely on the expressivity of neural nets and utilize the representation of the predictor.
\end{abstract}

\section{Introduction}

In supervised binary prediction, we are given examples $(x,y)\in \X\times \{0,1\}$ drawn independently from an unknown distribution $\mD$, where the labels $y\in \{0,1\}$ are binary. We wish to learn a predictor $f:\X\to [0,1]$ which assigns to each point $x\in \X$ a prediction $f(x)\in [0,1]$ as the estimated probability that the label is $1$. The performance of a prediction model is commonly measured using a loss function; popular losses include the  squared loss 
and the cross-entropy loss. 
\eat{Beyond performance measures, loss functions also serve as natural objectives for algorithm design. }
Machine learning algorithms learn a predictor $f$ by iteratively optimizing the expected loss (e.g.\ via stochastic gradient descent). This simple paradigm has proved remarkably powerful, and modern machine learning has a powerful arsenal of theoretical and empirical tools for loss minimization. 

Yet despite its considerable success, loss minimization alone typically does not guarantee everything we might want from our prediction models. It is not obvious if important desiderata such as fairness, privacy and interpretability can be guaranteed just from loss minimization. There has been considerable research effort dedicated to understanding how to modify the loss minimization template to ensure these desiderata, and into possible tradeoffs between these goals and loss minimization.

A desirable fairness guarantee that has been studied intensively in recent years is \emph{multicalibration}, introduced in the work of \citet{hkrr2018} \citep[see also][]{kearns2017preventing, KleinbergMR17}. Informally, it asks that the predictions be calibrated conditioned on each subgroup of the population $\X$ for a family of subgroups. 

\begin{definition}[Multicalibration \citep{kim2022universal, hkrr2018}]
\label{def:mc}
Let $\gamma > 0$ and $\mC$ be a class of auditor functions $c:\X\times [0,1] \to [-1,1]$.
The predictor $f:\X\to [0,1]$ is $(\mC, \gamma)$-multicalibrated or $(\mC, \gamma)$-MC if for all $c \in \mC$,
\begin{equation}
\label{eq:mc}
\Big|\E\nolimits_{(x,y)\sim \mD}\left[c\bigl(x, f(x)\bigr) \cdot \bigl(y-f(x)\bigr)\right]\Big| \le \gamma.
\end{equation}
When $\mC$ is clear from context, we say that $f$ is $\gamma$-MC for brevity.
\footnote{The MC definition above is version due to \citet[][Appendix 1]{kim2022universal}, which was also used by \citet{happymap, dwork2022beyond}. The original definition \citep{hkrr2018, GopalanKSZ22} considers only those $c$ that can be factored as $c(x, f(x)) = g(x)w(f(x))$ where $g:\X \to [-1, 1]$ and $w:\intzo \to [-1,1]$. For the setting where $\mC$ is a family of neural nets, the general definition is more natural, since it amounts to both $x$ and $f(x)$ being inputs to an auditor neural net.}
\end{definition}

When $c$ maps to $\{0,1\}$ values, it can be viewed as defining a {\em group} which is a subset of the domain $\X$ and MC means that $\E[y] \approx \E[f(x)]$ conditioned on membership in this group, where the degree of closeness depends on the size of the group. Good calibration is guaranteed for any group of sufficient likelihood that can be identified by the auditor family $\mC$. The connection to calibration comes from observing that taking $c(x, f(x)) =\mathbf{1}[f(x) = v]$ yields a contribution of at most $\gamma$ to the standard notion of expected calibration error (ECE) from the individuals $x$ with $f(x) = v$, and taking $c(x,f(x)) = \eta(f(x))$ for all choices of $\eta:[0,1]\to [-1,1]$ yields having ECE $\le\gamma$.

\subsection{Loss Minimization and Multicalibration}

Exploring the relation between multicalibration and loss minimization has led to a rich body of research work \citep{hkrr2018, omni, op2, op3, hu2022omnipredictors, GJRR23}. Let us describe the main findings from this body of work and how it relates to the question that we address.

The original paper of \cite{hkrr2018} showed that multicalibration can be achieved via a post-processing procedure: we can take an arbitrary predictor $f$ and {\em boost} it to a $\mC$-multicalibrated predictor $f'$ such that the squared loss (or any proper loss) only decreases. This suggests that multicalibration need not come at the cost of increased expected loss, provided we are willing to consider models of greater complexity (than $f$ and $\mC$). Indeed, during the boosting, the \citep{hkrr2018} algorithm repeatedly refines the predictor by augmenting it with functions from the base class $\mC$, thus increasing its accuracy and its complexity. The connection to boosting is made even more explicit in  \citep{omni,mc-boosting}. 
Given this, a natural question which motivates our work is: 

{\bf Question 1: Can we learn a $\mC$-multicalibrated predictor by performing standard loss minimization over a family $\mC'$ with greater complexity than $\mC$?}

By standard loss minimization, we mean a procedure that aims to minimize a loss $\ell$ in expectation over a fixed class $\mC'$ of predictors. 
The works of \citet{hkrr2018, omni, mc-boosting} point us to classes $\mC'$ that do indeed contain $\mC$-multicalibrated predictors, and they tell us how to find one such predictor using boosting iterations that also decrease the expected loss. However, they do not tell us that \emph{directly} minimizing any particular loss will lead us to one such predictor: the loss minimizer might prefer other predictors within $\mC'$ with lower expected loss, but which are not multicalibrated. 
Indeed we know examples where this tradeoff between loss minimization and multicalibration happens for some classes $\mC'$ \citep{gopalan2021multicalibrated, calibgap}. 
Thus Question 1 asks if there is a more direct connection between loss minimization and multicalibration than what was previously known.

If we increase the power of $\mC'$ to the extent that the family is expressive enough to contain the ground-truth predictor $f^*(x):= \E_\mD[y|x]$ (the realizable case), for the squared loss and the cross-entropy loss, it is not hard to see that loss minimization over the family will bring our predictor $f$ close to the ground truth $f^*$, and we will get multicalibration as a consequence \citep[see][Chapter 3]{barocas-hardt-narayanan} \citep{liu2019implicit}. However, this is not particularly insightful, since we do not expect this strong realizability assumption to hold in practice for predictor families used in common loss minimization algorithms. 

For more reasonable choices of $\mC'$ that do not guarantee realizability of the ground truth predictor, previous results reveal potential challenges to giving a positive answer to Question 1.
For some simple choices of $\mC'$, it is known that Question 1 has a negative answer since
there is a tradeoff between loss minimization and multicalibration. Indeed, for the predictor family considered in logistic regression comprising sigmoids of linear functions, 
the predictor with minimum expected loss need not be 
even calibrated \citep[see][]{calibgap}. Even if $\mC'$ contains predictor $f_1$ that is $\mC$-multicalibrated, it might contain another predictor $f_2$ that is better at loss minimization.\footnote{ Running the \citep{hkrr2018} boosting on $f_2$ will result in $f_3$ which is better at loss minimization than $f_2$ and is multicalibrated, but it may no longer belong to $\mC'$.}

Another potential challenge comes from the work of \citep{omni}, which showed that a multicalibrated predictor is an \emph{omnipredictor}, i.e., it can be used to minimize \emph{any} convex and Lipschitz loss function compared to a benchmark class of models defined based on $\mC$. \eat{Intuitively, if a predictor makes predictions that are indistinguishable from ground truth, then one can imagine that it ought to do well at minimizing any loss function. This argument is made precise in the work of \citep{op2}, via the notion of loss outcome indistinguishability which bridges the two notions above.} The omniprediction results seem to suggest that a positive answer to Question 1 without realizability is perhaps unlikely: a standard loss minimization procedure will find the best predictor $f \in \mC'$ tailored to a particular loss $\ell$, whereas in order to be $\mC$-multicalibrated, $f$ needs to be competitive with $\mC$ for every Lipschitz, convex loss. It is tempting to believe that algorithms for achieving multicalibration might have to go beyond the usual framework of minimizing a single loss function over a family of predictors. Indeed, all previously known algorithms for multicalibration require boosting updates similar to the algorithm of \citet{hkrr2018}.

To summarize, prior work tells us the following:
\begin{itemize}
     \item There are examples of $\mC'$ where loss minimization does not yield (multi)calibration. In these examples, $\mC'$ is not too much more powerful than $\mC$.
    \item Any positive answer to Question 1 requires $\mC'$ to be sufficiently powerful 
    relative to $\mC$, so that optimizing over $\mC'$ gives a $\mC$-omnipredictor for all convex, Lipschitz losses. 
    \item The answer to Question 1 is yes when $\mC'$ is extremely (perhaps unreasonably) powerful, so that it contains the ground truth predictor. 
\end{itemize}
This leaves open the possibility that a positive answer holds for $\mC', \mC$ pairs where $\mC'$ is more expressive than $\mC$, but loss minimization over $\mC'$ is still tractable.

\eat{

The study of omniprediction \cite{omni, op2, op3, performative-omni, hu2022omnipredictors} sheds further light on the relation between multicalibration and loss minimization. 
As shown by \citet{omni, op2}, Given that $\mC$-multicalibration yields a single predictor that can minimize any convex loss function as well as the best predictor in $\mC$ tailored to that loss, 

Indeed, all previous algorithms for multicalibration share similarity to boosting algorithms, where  the algorithm repeatedly refines the predictor by augmenting it with predictors from the base class $\mC$, thus increasing its complexity \citep[see e.g.][]{hkrr2018,omni,mc-boosting}. But could one get away with performing standard loss minimization for a single loss function over a more expressive class $\mC'$ that can capture the increase in complexity that

In summary, previous work suggests that a predictor $f$ which is $\mC$ multicalibrated must
\begin{enumerate}
\item have complexity greater than that of $\mC$.
\item do well at minimizing any convex Lipschitz loss function (as well as the best hypothesis from $\mC$). 
\end{enumerate}

In contrast, standard loss minimization algorithms such as stochastic gradient descent perform optimization within a fixed family (e.g.\ neural networks with a given architecture).

Our work is motivated by two questions: whether boosting-like algorithms are necessary for achieving multicalibration, and when the common practice of loss minimization is aligned with the goal of multicalibration. Specifically, we ask: \emph{can we achieve multicalibration by minimizing a loss function over a \emph{restricted and fixed} family of predictors?}

The answer to this question necessarily depends on the representation ability of the predictor family.

Without realizability, loss minimization over a restricted family may not even guarantee \emph{calibration}, which is a weak form of multicalibration where the functions $c$ only depend on $f(x)$ and not directly on $x$ (e.g.\ work by \citet{calibgap} includes a simple example where logistic regression produces a seriously mis-calibrated predictor). 
Is there a family of predictors such that 1) there are natural and practical loss minimization algorithms over the family, and 2) the predictor produced by loss minimization over the family is guaranteed to be multicalibrated?

}

\subsection{Our Contribution: Multicalibration from Standard Loss Minimization}

We show that multicalibration with respect to neural networks of size $k$ can be achieved solely by minimizing the squared loss over the family of neural networks of size $n$, {\bf for all but a few choices of $n$}. This result provides a positive answer to Question 1 without making any realizability assumptions. While we focus on the squared loss, our results extend to any proper loss such as the cross-entropy loss (see \Cref{sec:proper}).

Specifically, we take the auditor class $\mC$ in \Cref{def:mc} to be $\NN_k^*$ which consists of all functions $c:\X\times [0,1]\to [-1,1]$ computable by some $k$-node neural network, where we assume the domain $\X$ is a subset of a Euclidean space $\R^d$. For concreteness, we use the ReLU activation function, which is a popular choice in practice (see \Cref{sec:prelims} for formal definitions).  
Multicalibration w.r.t.\ $\mC = \NN_k^*$ guarantees good calibration on \emph{all} large enough groups identifiable by a size-$k$ neural network. This is a strong guarantee when $k$ is large enough to express interesting groups. We consider minimizing the squared loss over the family $\NN_n$ of all predictors $f:\X\to [0,1]$ computable by some $n$-node neural network. For a distribution $\mD$ over $\X\times \{0,1\}$, we say a predictor $f\in \NN_n$ is $\varepsilon$-loss-optimal if
\[
\E\nolimits_{(x,y)\sim \mD}[(f(x) - y)^2] \le \inf_{f'\in \NN_n}\E\nolimits_{(x,y)\sim \mD}[(f'(x) - y)^2] + \varepsilon.
\]
We prove the following theorem showing that loss optimality implies multicalibration for all but a few choices of $n$:
\begin{theorem}%
\label{thm:informal-main}
    Let $\mD$ be a distribution over $\X\times \{0,1\}$. For every $k\in \Z_{> 0}$ and every $\alpha > 0$, for all but at most $(k + 2)/\alpha$ choices of $n\in \Z_{\ge 0}$, for any $\varepsilon\in (0,1)$, every $\varepsilon$-loss-optimal predictor $f\in \NN_n$ is $(\NN_k^*, \sqrt{\alpha + \varepsilon})$-MC.
\end{theorem}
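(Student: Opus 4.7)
The plan is to establish a single \emph{boosting step}: if some $f\in \NN_n$ admits an auditor $c\in \NN_k^*$ witnessing miscalibration greater than $\gamma$, then there is a neural network $f'\in \NN_{n+k+2}$ whose squared loss is at least $\gamma^2$ smaller than that of $f$. From this the theorem will follow by a telescoping argument on the non-increasing sequence
\[
L(n)\;:=\;\inf_{f\in \NN_n}\E\nolimits_{(x,y)\sim\mD}[(f(x)-y)^2] \;\in\; [0,1].
\]

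For the boosting step, I may assume WLOG that $\E[c(x,f(x))(y-f(x))]\ge \gamma$ (otherwise use $-c$, which still lies in $\NN_k^*$). Setting $g(x):=f(x)+\gamma\cdot c(x,f(x))$ and $f'(x):=\clip_{[0,1]}(g(x))$, expanding the square gives
\[
\E[(g(x)-y)^2] \;=\; \E[(f(x)-y)^2] - 2\gamma\,\E[c(x,f(x))(y-f(x))] + \gamma^2 \E[c(x,f(x))^2] \;\le\; \E[(f(x)-y)^2] - \gamma^2,
\]
using $c\in [-1,1]$. Since $y\in\{0,1\}\subseteq [0,1]$, pointwise clipping to $[0,1]$ only moves $g$ closer to $y$, so the same inequality holds for $f'$. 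For size: wiring the output of the $n$-node network computing $f$ into the last input of the $k$-node network computing $c(\cdot,\cdot)$ uses $n+k$ ReLUs, the linear combination $f+\gamma c$ is absorbed into an output layer, and $\clip_{[0,1]}(z) = \relu(z) - \relu(z-1)$ costs exactly $2$ additional ReLUs. Hence $f'\in \NN_{n+k+2}$.

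Next, call $n$ \emph{unlucky} if there exist $\varepsilon\in(0,1)$ and an $\varepsilon$-loss-optimal $f\in \NN_n$ that is not $(\NN_k^*,\sqrt{\alpha+\varepsilon})$-MC. For such $n$, applying the boosting step with $\gamma=\sqrt{\alpha+\varepsilon}$ produces $f'\in \NN_{n+k+2}$ with
\[
L(n+k+2) \;\le\; \E[(f'-y)^2] \;\le\; \E[(f-y)^2] - (\alpha+\varepsilon) \;\le\; L(n) + \varepsilon - \alpha - \varepsilon \;=\; L(n) - \alpha.
\]
Crucially the conclusion $L(n)-L(n+k+2)\ge\alpha$ no longer mentions $\varepsilon$, so it is a property of $n$ alone.

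Finally I partition $\Z_{\ge 0}$ into the $s:=k+2$ residue classes modulo $s$. Along each arithmetic progression $r, r+s, r+2s,\ldots$, the non-increasing $[0,1]$-valued sequence $L(\cdot)$ telescopes: $\sum_{j\ge 0}\bigl(L(r+js)-L(r+(j+1)s)\bigr) \le L(r) \le 1$, so at most $1/\alpha$ terms in this class can be unlucky. Summing over all $s=k+2$ classes yields at most $(k+2)/\alpha$ unlucky values of $n$ in total, which is the claimed bound. The step I expect to require the most care is the precise neural-net size accounting: landing at exactly the constant $2$ (rather than some larger $O(1)$) depends on the convention that an output/linear layer is free and that $c$'s prediction-input slot can be directly identified with the output neuron of $f$; with a less forgiving convention the same argument would still give a bound of the form $(k+O(1))/\alpha$.
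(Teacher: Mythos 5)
Your proof is correct and follows essentially the same approach as the paper: a boosting step that shows an MC violation yields a loss decrease within the larger class $\NN_{n+k+2}$, followed by a residue-class telescoping argument on the non-increasing sequence $L(n)\in[0,1]$ to bound the number of unlucky $n$. The only substantive difference is cosmetic: you use the \emph{threshold} $\gamma=\sqrt{\alpha+\varepsilon}$ as the update step size (giving loss reduction $\ge 2\gamma^2-\gamma^2=\gamma^2$ after using $\E[c(y-f)]\ge\gamma$), while the paper uses the \emph{exact correlation} $\beta:=\E[c(x,f(x))(y-f(x))]$ as the step size (giving reduction $\beta^2\ge\gamma^2$, via Lemma~\ref{lem:standard}); both suffice. (One tiny point you elided: with your choice, the closure requirement $\gamma\in[-1,1]$ needs $\alpha+\varepsilon\le 1$, but this is WLOG since otherwise the MC bound $\sqrt{\alpha+\varepsilon}>1$ is vacuous. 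The paper's $\beta$ lies in $[-1,1]$ automatically.) On the node count, the paper's Lemma~\ref{lm:composition} sidesteps your output-layer concern by writing $\clip(z)=\relu(z-\relu(z-1))$, so the outer ReLU is itself the output node, landing exactly at $n+k+2$; there is also a small subtlety about the output node of $c$ not applying ReLU (since $c$ maps to $[-1,1]$), which the paper handles in a footnote by substituting that node's affine map into its successors. Finally, the paper phrases the argument in terms of the abstract $\opt_n\le\opt_{n+k}+\alpha$ condition (Theorem~\ref{thm:sequence}) and then instantiates it for neural nets, which yields the junta result (Theorem~\ref{thm:upper-bound}) for free; your inlined version proves the same thing but would need to be redone for each family.
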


Note that the above theorem does not make any realizability assumptions. In particular, an $\varepsilon$-loss-optimal predictor $f\in \NN_n$ may have a significantly larger expected squared loss compared to $f^*$, and the theorem still guarantees that $f$ is multicalibrated. \Cref{thm:informal-main} implies that if a neural network $f\in \NN_n$ violates multicalibration beyond a small threshold, its expected squared loss must be sub-optimal within the family $\NN_n$, except for a few unlucky choices of $n$. Outside of these choices, a multicalibration violation indicates potential for further improvement of the expected loss {\em within the family $\NN_n$}.

Our result can be viewed as a demonstration of the representation ability of neural networks, complementary to the universal representation theorems \citep[e.g.][]{universal-approx}. While neural networks of a certain size $n$ cannot express all functions (in particular, the ground truth $f^*$ may be far from such neural networks), except for a few choices of $n$ they can always express a multicalibrated predictor, and such a predictor can be found by minimizing the squared loss. 

Our results are more about representational aspects of neural networks, and do not address algorithmic or sample complexity considerations. They currently do not apply to specific algorithms
for optimizing neural networks, such as SGD, and they should not be interpreted as
``fairness comes for free from optimizing neural networks''. The question of whether loss minimization over neural networks can be performed efficiently does not have a simple answer, SGD is found to do well in practice. The question of whether it results (for most $n$) in networks that are multicalibrated for smaller size neural networks is an interesting question from the theoretical and experimental viewpoint. See \Cref{sec:discussion} for an extended discussion.

\paragraph{A Generalization.}
In our proof of \Cref{thm:informal-main}, a key property we use about neural networks  provides an explanation for their representation ability demonstrated by the theorem. The property is the simple fact that the composition of two neural networks can be implemented by another neural network with size roughly equal to the sum of the sizes of the two initial networks. Indeed, we generalize \Cref{thm:informal-main} to any sequence of families of predictors closed under composition:

\begin{theorem}
\label{thm:sequence}
Let $\mC$ be a class of auditing functions $c:\X\times [0,1]\to [-1,1]$.
Let $\mF_0,\mF_1,\ldots$ be families of predictors $f:\X\to [0,1]$ satisfying $\emptyset \ne \mF_0\subseteq \mF_1 \subseteq \cdots$. For some positive integer $k$, assume that for every nonnegative integer $n$, every $f\in \mF_n$, every $c\in \mC$, and every $\beta\in [-1,1]$, the composed predictor $f'$ defined below satisfies $f'\in \mF_{n+k}$:
\begin{equation}
\label{eq:f'}
f'(x) = \clip(f(x) + \beta c(x,f(x))) \quad \text{for every }x\in \X,
\end{equation}
where $\clip (z) = \max(0,\min(1,z))\in [0,1]$ for every $z\in \R$. Then for every $\alpha > 0$, for all but at most $k/\alpha$ choices of $n\in \Z_{\ge 0}$, for any $\varepsilon > 0$, every $\varepsilon$-loss-optimal $f\in \mF_n$ is $(\mC, \sqrt{\alpha + \varepsilon})$-MC.
\end{theorem}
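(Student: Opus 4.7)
Let $L_n := \inf_{f\in\mF_n}\E_{(x,y)\sim\mD}[(f(x)-y)^2]$. Since the families are nested and nonempty and the integrand lies in $[0,1]$, the sequence $L_0\ge L_1\ge\cdots\ge 0$ is non-increasing and bounded above by $1$. My plan is to show that any \emph{bad} $n$—one for which there exists some $\varepsilon>0$ and some $\varepsilon$-loss-optimal $f\in\mF_n$ failing to be $(\mC,\sqrt{\alpha+\varepsilon})$-MC—forces a strict drop $L_n - L_{n+k} > \alpha$, and then to bound the number of such bad $n$ by telescoping along arithmetic progressions of step $k$.

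For the drop, suppose $n$ is bad, witnessed by $\varepsilon>0$, an $f\in\mF_n$ with $\E[(f-y)^2]\le L_n+\varepsilon$, and a $c\in\mC$ such that $\gamma_0 := \E[c(x,f(x))(y-f(x))]$ satisfies $\gamma_0^2 > \alpha+\varepsilon$. Since $|\gamma_0|\le 1$, I set $\beta := \gamma_0 \in [-1,1]$, so by the theorem's closure hypothesis the predictor $f'(x) = \clip(f(x)+\beta c(x,f(x)))$ lies in $\mF_{n+k}$. Using the contraction property $(\clip(z)-y)^2 \le (z-y)^2$ for $y\in[0,1]$ and then expanding the square,
\[
\E[(f'-y)^2] \le \E[(f-y)^2] - 2\beta\gamma_0 + \beta^2\,\E[c(x,f(x))^2] \le \E[(f-y)^2] - \gamma_0^2,
\]
where the second inequality uses $\E[c^2]\le 1$ and $\beta=\gamma_0$. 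Chaining $L_{n+k}\le\E[(f'-y)^2] \le L_n + \varepsilon - \gamma_0^2 < L_n - \alpha$ gives the promised drop.

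To count bad $n$, I partition $\Z_{\ge 0}$ into the $k$ arithmetic progressions $A_r = \{r, r+k, r+2k,\ldots\}$ for $r = 0,\ldots,k-1$. Along each $A_r$ the sequence $L_r \ge L_{r+k} \ge \cdots \ge 0$ telescopes: $\sum_{j\ge 0}(L_{r+jk} - L_{r+(j+1)k}) \le L_r \le 1$. Since each bad $n$ lying in $A_r$ contributes a summand strictly greater than $\alpha$, strictly fewer than $1/\alpha$ bad indices lie in $A_r$. Summing across the $k$ residue classes produces strictly fewer than $k/\alpha$ bad $n$, hence at most $k/\alpha$, which is the stated bound.

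The one nontrivial piece is designing the single-step update so that it is simultaneously admissible, effective, and quantifiable: the parameter $\beta = \gamma_0$ conveniently lies in the allowed range $[-1,1]$, it yields a quadratic-in-$\gamma_0$ improvement (so the $\sqrt{\alpha+\varepsilon}$-MC threshold is exactly what matches the $\alpha$ drop), and the clipping step preserves this improvement because labels lie in $[0,1]$. Everything else—monotonicity of $L_n$, the telescoping over residue classes, and the contrapositive framing—is bookkeeping.
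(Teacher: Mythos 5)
Your proof is correct and follows essentially the same route as the paper's: define the optimum $\opt_n$ over $\mF_n$, show that a multicalibration violation by an $\varepsilon$-loss-optimal $f\in\mF_n$ forces (via the boosting update with $\beta=\gamma_0$ and the clip-contraction) a drop $\opt_n-\opt_{n+k}>\alpha$, and then bound the number of such $n$ by telescoping along the $k$ residue classes mod $k$. The only difference is cosmetic framing — you phrase it as ``bad $n$ implies a large drop'' whereas the paper first isolates the $n$ with small drop and then shows those are good — but the lemma, the choice of $\beta$, and the counting argument are identical.
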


By definition, neural networks satisfy closeness under composition as required in \Cref{thm:sequence}, allowing us to prove \Cref{thm:informal-main} as a consequence of \Cref{thm:sequence} (see \Cref{sec:analysis} for details). 
In addition, \Cref{thm:sequence} implies variants of \Cref{thm:informal-main} where we enforce various network architectures, though for simplicity we focus on a general feed-forward architecture in this paper.

Besides neural networks, we can apply \Cref{thm:sequence} to other predictor families. For example, a well-studied family in computational learning theory is the family of \emph{juntas}, which is also the family we consider in our lower bound results (see \Cref{sec:intro-lower} below). Here, the domain $\X$ is the Boolean cube $\{-1,1\}^m$, and a function $f$ over $\X$ is called a $k$-junta if $f(x)$ only depends on a fixed set of $k$ coordinates of $x$ for every $x\in \X$ (see \Cref{sec:lower} for formal definition). We use $\mJ_k$ to denote the class of all $k$-juntas $f:\X\to [0,1]$, and use $\mJ_k^*$ to denote the class of all $k$-juntas $f:\X\to [-1,1]$. 
We consider multiaccuracy (MA) (defined formally in \Cref{def:ma}) which is a weaker notion than multicalibration where functions $c\in \mC$ only takes $x$ as input, instead of taking $x$ and $f(x)$ as input as in \Cref{def:mc}.
Like neural networks, juntas also satisfy closeness under composition, so we get the following theorem as a corollary of \Cref{thm:sequence} (see \Cref{sec:lower} for proof):

\begin{restatable}{theorem}{juntaupper}
\label{thm:upper-bound}
Let $\mD$ be a distribution over $\X\times \{0,1\}$, where $\X = \{-1,1\}^m$ for a positive integer $m$. Let $k\in \mathbb Z_{> 0}$ and $\alpha\in (0,1]$ be parameters. Then for all but at most $k/\alpha$ nonnegative integers $n$, for any $\varepsilon > 0$, every $\varepsilon$-loss-optimal $f\in \mJ_n$ is $(\junta_k^*, \sqrt{\alpha + \varepsilon})$-MA. 
\eat{(More precisely, every $c\in \mC$ takes $(x,v)\in \X\times [0,1]$ as input, ignores $v$, and outputs $f(x)$ for some $f\in \junta_k(X)$. Since $c$ ignores $v$, the notion of multicalibration here reduces to multiaccuracy.)}
\end{restatable}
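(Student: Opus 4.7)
The plan is to derive this statement as a direct corollary of \Cref{thm:sequence} by taking the nested family $\mF_n = \junta_n$ and by embedding $\junta_k^*$ into a suitable auditor class $\mC$ of functions on $\X\times [0,1]$. First I would verify the structural hypotheses: $\junta_0$ consists of constants (so is nonempty), and trivially $\junta_0\subseteq \junta_1\subseteq \cdots$ because any $k$-junta is also an $(k+1)$-junta (just add a coordinate to the relevant set and ignore it). These are both immediate from the definition of a junta.

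Next I would define the auditor class by lifting: set
\[
\mC = \{\,\tilde c : \X\times [0,1]\to [-1,1] \,:\, \tilde c(x,v) = g(x)\text{ for some }g\in \junta_k^*\,\}.
\]
This is essentially a renaming, and multicalibration against $\mC$ coincides with multiaccuracy against $\junta_k^*$ under \Cref{def:ma}, since the auditors in $\mC$ ignore their second argument. So once we get $(\mC, \sqrt{\alpha+\varepsilon})$-MC from \Cref{thm:sequence}, we automatically obtain $(\junta_k^*, \sqrt{\alpha+\varepsilon})$-MA.

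The heart of the argument is then to verify the composition-closure hypothesis of \Cref{thm:sequence}. Fix $n\ge 0$, $f\in \junta_n$, $\tilde c\in \mC$ with underlying $g\in \junta_k^*$, and $\beta\in [-1,1]$. Let $S_f\subseteq [m]$ with $|S_f|\le n$ be the coordinate set on which $f$ depends, and similarly let $S_g\subseteq [m]$ with $|S_g|\le k$ be the coordinate set for $g$. Then the function
\[
f'(x) \;=\; \clip\bigl(f(x) + \beta\, \tilde c(x, f(x))\bigr) \;=\; \clip\bigl(f(x) + \beta\, g(x)\bigr)
\]
depends only on the coordinates in $S_f\cup S_g$, which has size at most $n+k$; clipping into $[0,1]$ preserves this dependency structure and the codomain. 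Hence $f'\in \junta_{n+k}$, so the hypothesis of \Cref{thm:sequence} holds with this particular $k$. Applying \Cref{thm:sequence} directly yields the stated conclusion.

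I do not expect any real obstacle here; the whole content is the bookkeeping around (i) the trivial support-union calculation showing that juntas are closed under composition in the required sense, and (ii) the equally trivial embedding of $\junta_k^*$ into auditor functions of the multicalibration form that ignore their second input. Both of these are structural facts about juntas that do not interact with the loss minimization step at all, so essentially all of the work is done by \Cref{thm:sequence}.
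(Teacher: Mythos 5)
Your proposal is correct and matches the paper's proof essentially exactly: the paper also derives this as a corollary of \Cref{thm:sequence} by observing that $h(x)=\clip(f(x)+\beta c(x))$ lies in $\junta_{n+k}$ when $f\in\junta_n$ and $c\in\junta_k^*$, with the auditors implicitly lifted to ignore their second argument so that multicalibration reduces to multiaccuracy. Your support-union argument and the nesting/nonemptiness checks are the same bookkeeping the paper leaves implicit.
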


\paragraph{A Smoothed Analysis Perspective.} 
Since there are a few unlucky neural network sizes $n$ for which \Cref{thm:informal-main} does not provide multicalibration guarantees, one may be worried that the size used in a  task in practice might be among the unlucky ones. However, it is often reasonable to assume that there is sufficient randomness involved in a specific choice of $n$ in practice, especially when $n$ is large as in modern neural networks. If $n$ is chosen, say, uniformly at random from a large range, the probability of $n$ being among the few unlucky choices is small. This is the perspective taken in \emph{smoothed analysis} \citep{smoothed-analysis}: by assuming that the instances arising in practice contain some intrinsic randomness, we can often show stronger guarantees for them than for the worst-case instance. We also show how to completely avoid unlucky sizes by adding a regularization term to the loss minimization problem (see \Cref{sec:srm}).

\paragraph{Proof Sketch.} Our proof of \Cref{thm:sequence} is simple in hindsight, which we view as a plus. It combines the existing analysis of the boosting style updates of \citet{hkrr2018} with some new ideas, and exploits the structure of the updates. 

For $\gamma = \sqrt{\alpha + \varepsilon}$, consider a predictor $f:\X\to [0,1]$ that is not $\gamma$-multicalibrated w.r.t.\ $\mC$, i.e., there exists $c\in \mC$ such that \eqref{eq:mc} is violated. A key step in previous boosting style algorithms is to use $c$ to decrease the expected squared loss of $f$, by considering the predictor $f'$ defined by \eqref{eq:f'}.
It can be shown that when $\beta\in [-1,1]$ is chosen properly, 
updating $f$ to $f'$ decreases the expected loss by more than $\gamma^2 = \alpha + \varepsilon$. 
In \Cref{thm:sequence}, if $f$ belongs to some $\mF_n$, then the updated predictor $f'$ belongs to $\mF_{n+k}$. 
Therefore, if $f\in \mF_n$ is not $\gamma$-multicalibrated, 
$f$ is not $(\alpha + \varepsilon)$-loss-optimal w.r.t.\ the \emph{larger} class $\mF_{n+k}$.
To prove \Cref{thm:sequence}, we need to show that $f$ is not $\varepsilon$-loss-optimal w.r.t.\ the \emph{current} class $\mF_n$.

Let $\opt_n$ denote the minimum expected loss achievable by predictors from $\mF_n$.
If $\opt_n \le \opt_{n+k} + \alpha$, then $f\in \mF_n$ being not $(\alpha+\varepsilon)$-loss-optimal w.r.t.\ $\mF_{n+k}$ implies that it is not $\varepsilon$-loss-optimal w.r.t.\ $\mF_n$, as desired. Thus we only need to worry about cases where $\opt_{n}$ is larger than $\opt_{n+k} + \alpha$. We observe that $\opt_n$ is non-increasing in $n$ and it is bounded in $[0,1]$, so there can only be a few choices of $n$ for which $\opt_n$ is larger than $\opt_{n+k}$ by a significant amount $\alpha$. Excluding these bad choices of $n$ allows us to prove \Cref{thm:sequence}. A detailed proof is presented in \Cref{sec:analysis}.

\subsection{Lower Bound}
\label{sec:intro-lower}
A natural question is whether the bound on the number of unlucky choices of $n$ can be improved. While we are not able to show that the bound in \Cref{thm:informal-main} is optimal, we show tightness of the bound in \Cref{thm:upper-bound} for juntas. 
We prove the following lower bound showing tightness up to constant: there are indeed $\Omega(k/\alpha)$ integers $n$ for which the loss-optimal junta is not $\sqrt \alpha$-multiaccurate. The right quantitative bound uses the noise stability of the majority function \citep[see][]{o2014analysis}. Thus, a stronger result for neural networks than \Cref{thm:informal-main}, if it exists, would use properties of neural networks beyond what our analysis currently uses.

\begin{theorem}[Informal statement of Theorem \ref{thm:lower-bound}] 
For every $k$ and small enough $\alpha$, there exist $m\in \mathbb Z_{> 0}$, a distribution $\mD$ over $\X\times \{0,1\} = \pmo^m\times \{0,1\}$, and at least $\Omega( k/\alpha)$ distinct non-negative integers $n$ such that any model $f_n\in \junta_n$ is \emph{not} $(\junta_k^*,\sqrt{\alpha})$-MA.
\end{theorem}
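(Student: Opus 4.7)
My approach is to build a simple symmetric distribution on $\pmo^m \times \{0,1\}$ in which every candidate model $f_n$ that depends on only $n \le m-k$ of the coordinates leaves a large multiaccuracy signal witnessed by the majority of $k$ untouched coordinates. Let $m$ be an odd integer of order $k/\alpha$ (the precise constant pinned down by the noise-stability calculation below), let $x$ be uniform on $\pmo^m$, and set $y := (1 + \maj(x_1,\ldots,x_m))/2 \in \{0,1\}$ deterministically. I will show that every integer $n$ with $0 \le n \le m-k$ is unlucky, which yields $m-k+1 = \Omega(k/\alpha)$ such $n$ once $\alpha$ is small enough that $m \ge 2k$.

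Fix $n$ in this range and any $f_n \in \junta_n$; let $T \subseteq [m]$, $|T| \le n$, be the coordinates it depends on. Since $m - |T| \ge k$, I can pick $S \subseteq [m]\setminus T$ with $|S| = k$ (if $k$ is even I use $k-1$ coordinates and still have a $k$-junta, losing only a constant factor in the bound). Define the auditor $c(x) := \maj(x_S) \in \pmo$, which lies in $\junta_k^*$. Because $S$ and $T$ are disjoint, under the uniform product distribution $c(x)$ is independent of $f_n(x)$; combined with $\E[c(x)] = 0$ this gives
\[
\E\bigl[c(x)(y - f_n(x))\bigr] \;=\; \E\bigl[c(x)\, y\bigr] \;=\; \tfrac{1}{2}\,\E\bigl[\maj(x_S)\,\maj(x_1,\ldots,x_m)\bigr].
\]

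The heart of the argument is the quantitative noise-stability estimate
\[
\E\bigl[\maj(x_S)\,\maj(x_1,\ldots,x_m)\bigr] \;=\; \tfrac{2}{\pi}\arcsin\!\bigl(\sqrt{k/m}\bigr) + o(1) \;\ge\; c_0\,\sqrt{k/m}
\]
for an absolute constant $c_0 > 0$, which follows from the CLT applied to the correlated partial sums $\sum_{i \in S} x_i$ and $\sum_{i=1}^m x_i$ (Sheppard's formula, i.e.\ the standard noise-stability analysis of majority in \citep{o2014analysis}). Choosing $m$ to be an odd integer with $k/(c_1 \alpha) \le m \le 2k/(c_1 \alpha)$ for a suitable absolute $c_1$ makes the right-hand side at least $2\sqrt{\alpha}$, so the displayed expectation exceeds $\sqrt{\alpha}$ in absolute value, certifying that $f_n$ is not $(\junta_k^*, \sqrt{\alpha})$-MA.

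The main technical obstacle is precisely this noise-stability estimate: it is where the quantitative constants appear and where the hypothesis ``$\alpha$ small enough'' is consumed, since we need $m = \Theta(k/\alpha)$ to be large enough that the correlation $\E[\maj(x_S)\maj(x_{[m]})]$ genuinely scales like $\sqrt{k/m}$ with a usable constant. Everything else is a one-line orthogonality argument exploiting disjointness of the coordinate sets used by the auditor and the candidate junta, and the final counting step is immediate.
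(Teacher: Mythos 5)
Your proposal follows essentially the same route as the paper: same distribution ($x$ uniform, $y = (1+\maj(x))/2$), same disjoint-coordinate orthogonality argument making $\E[c(x)f_n(x)] = 0$, same auditor $\maj(x_S)$ with $S$ disjoint from the junta's coordinates, the same handling of even $k$ by dropping to the largest odd $k_1 \le k$, and the same choice $m = \Theta(k/\alpha)$ so that every $n \le m-k$ is unlucky, giving $\Omega(k/\alpha)$ bad sizes.

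The one place where you should be more careful is the justification of the key estimate $\E\bigl[\maj(x_S)\maj(x_{[m]})\bigr] \ge c_0\sqrt{k/m}$. You appeal to the CLT and Sheppard's formula to get $\frac{2}{\pi}\arcsin\sqrt{k/m} + o(1)$, but that $o(1)$ is an asymptotic as both $k,m \to \infty$; the CLT applied to $\sum_{i\in S}x_i$ is simply not valid when $k$ is a fixed small constant, yet the theorem must hold for every $k$, including $k=1$. The paper's \Cref{lm:maj-cor} avoids this by a short non-asymptotic Fourier argument: expand both majorities via Plancherel, use that all cross-terms $\widehat{\maj}_k(S)\widehat{\maj}_m(S)$ are nonnegative, and lower-bound the degree-1 contribution with the central-binomial estimate $\binom{k-1}{(k-1)/2}/2^{k-1} > \sqrt{2/(\pi k)}$ coming from the Wallis product. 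This yields the clean bound $> (2/\pi)\sqrt{k/m}$ for all odd $k \le m$ with no limits taken. Replacing your CLT invocation with this (or with any non-asymptotic noise-stability bound from O'Donnell's book) closes the gap; everything else in your argument is correct.
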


\subsection{Multicalibration from Structural Risk Minimization}
\label{sec:srm}
There are a few unlucky choices of $n$ for which \Cref{thm:informal-main,thm:sequence} do not guarantee multicalibration from loss minimization. We show that by adding an appropriate regularization term to the loss minimization problem, we can avoid such unlucky choices of $n$. This regularized approach can be viewed as a form of structural risk minimization, and it can be motivated economically as follows. 

Deploying large predictors is expensive. To deploy a finite predictor, one may consider a cost, say, a linear cost of for deploying a NN of $n$ nodes. Once amortized over queries, this is equivalent to using regularization and choosing a predictor that minimizes loss plus a constant, say $1/N$, times size. Since the squared loss is bounded in $[0,1]$, the optimal solution is always an NN with at most $N$ nodes. Our theorem below shows that large NNs optimized for loss, with sufficiently small regularization, achieve MC. 
\eat{These networks may be similarly large.\lunjia{I'm not sure I understand what ``similarly'' means in the last sentence.}\pnote{Me neither. Lets drop the sentence?}}

\begin{theorem}[Informal version of Theorem \ref{thm:reg}]
    Let $k \in \N$ be an integer and $\alpha \in (0,1]$. Then, for some $N_0 = O(k/\alpha)$ and any $N\ge N_0$, selecting a loss-optimal NN with size regularization coefficient $1/N$ will lead to a NN of size $\le N$ that is $\sqrt\alpha$-MC with respect to NNs of size $k$.
\end{theorem}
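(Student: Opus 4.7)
\textbf{Proof plan for Theorem~\ref{thm:reg}.} The plan is to show that a regularized minimizer is automatically loss-optimal within whatever size class it happens to use, and then to apply the boosting step behind \Cref{thm:sequence} at that single size. Write $L_\lambda(f) := \E_{(x,y)\sim\mD}[(f(x)-y)^2] + \lambda\cdot\mathrm{size}(f)$ with $\lambda = 1/N$, and let $f^*$ minimize $L_\lambda$ over all neural networks, with $n^* := \mathrm{size}(f^*)$. Comparing $f^*$ to the constant predictor $g\equiv 1/2$ (squared loss $1/4$, size $O(1)$) gives $n^*/N \le L_\lambda(f^*) \le 1/4 + O(1)/N$, so $n^* \le N/4 + O(1) \le N$ once $N_0$ absorbs the constant. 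This is the asserted size bound.

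The key observation is that $f^*$ exactly minimizes the expected squared loss within $\NN_{n^*}$: any $g \in \NN_{n^*}$ with strictly smaller squared loss would satisfy $L_\lambda(g) < L_\lambda(f^*)$ as well, contradicting optimality of $f^*$. Hence $f^*$ is $0$-loss-optimal in $\NN_{n^*}$, which is exactly the hypothesis required to run the boosting step of \Cref{thm:sequence} at $n=n^*$, so there is no need to pre-select a lucky size.

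Now suppose, for contradiction, that $f^*$ is not $\sqrt{\alpha}$-MC with respect to $\NN_k^*$: some $c \in \NN_k^*$ satisfies $|\E[c(x,f^*(x))(y-f^*(x))]| > \sqrt{\alpha}$. Exactly as in the proof sketch of \Cref{thm:sequence}, choose $\beta\in[-1,1]$ and set $f'(x) := \clip(f^*(x)+\beta c(x,f^*(x)))$; the standard one-line computation then yields $\E[(f'(x)-y)^2] < \E[(f^*(x)-y)^2] - \alpha$. By closure of neural networks under composition, $f' \in \NN_{n^*+k'}$ for some $k' = O(k)$ (the same constant that drives \Cref{thm:informal-main}). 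Therefore $L_\lambda(f') < L_\lambda(f^*) - \alpha + k'/N$, which is strictly below $L_\lambda(f^*)$ as soon as $N > k'/\alpha$. Taking $N_0 := \max\bigl(\lceil k'/\alpha\rceil + 1,\, O(1)\bigr) = O(k/\alpha)$ gives the required contradiction and the advertised $\sqrt{\alpha}$-MC guarantee.

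The main point worth flagging, rather than a genuine obstacle, is why the \emph{unlucky sizes} of \Cref{thm:sequence} simply vanish here: a large gap $\opt_{n^*} - \opt_{n^*+k'} > k'/N$ would itself produce a cheaper regularized solution at size $n^*+k'$, contradicting minimality of $f^*$. The regularization term $\lambda\cdot\mathrm{size}(f)$ therefore automatically routes the optimizer to a size that plays the role of a non-unlucky $n$ in \Cref{thm:sequence}, explaining why every $N \ge N_0$ works uniformly. The only bookkeeping needed is making the constant $k'$ from the composition step match the $O(k/\alpha)$ in the statement, and keeping $N_0$ above the small constant that comes from the $n^* \le N$ argument; both adjustments preserve the $N_0 = O(k/\alpha)$ bound.
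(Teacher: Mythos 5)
Your proof is correct and takes essentially the same route as the paper's (\Cref{thm:sequence-reg} applied via \Cref{lm:composition}), differing only in that you argue by contradiction where the paper writes the inequality $\eloss(f)\le \opt_{n+k}+\alpha+\varepsilon$ directly and then invokes \Cref{lem:standard}. One small distraction worth flagging: the claim in your second paragraph that being $0$-loss-optimal within $\NN_{n^*}$ ``is exactly the hypothesis required to run the boosting step of \Cref{thm:sequence} at $n=n^*$, so there is no need to pre-select a lucky size'' does not follow as stated---\Cref{thm:sequence} also needs $n^*$ itself to be non-unlucky---and what actually makes the unlucky sizes vanish is the correct argument you give afterward, namely that the regularizer forces $\eloss(f^*)\le\opt_{n^*+k'}+k'/N$ so the loss-reduction step applied against the larger class $\NN_{n^*+k'}$ yields the contradiction (matching the paper's \Cref{remark:reg} and \Cref{thm:sequence-reg}).
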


\subsection{Related Work}
The notion of multicalibration for multigroup fairness was introduced in \cite{hkrr2018}, see also \cite{kearns2017preventing, KleinbergMR17}. This notion has proved to be unexpectedly rich, with connections to several other areas. The work of  \citep{OI} relates it to indistinguishability, while \citep{kim2022universal} connects it to domain adaptation. A line of work on omniprediction \citep{omni, op2,hu2022omnipredictors, performative-omni, GJRR23} has shown that multicalibration for a class $\mC$ implies strong loss minimization guarantees when compared against benchmarks from $\mC$. This is the opposite direction to the one we study here: it shows settings where loss minimization results from multicalibration. A more recent work \citep{op3} in fact shows an equivalence between multicalibration and certain swap loss minimization problems. Without getting into details, the model of loss minimization is inspired by internal regret in online learning, and is different from the standard notion that we study here. 

The relation between loss minimization and multicalibration has been investigated extensively in the literature over the last few years. 
In addition to what we have discussed earlier in the introduction,
it is known that minimizing any proper loss for linear models over a base hypothesis class yields multiaccuracy w.r.t.\ the base class \citep{op2}, but not multicalibration \citep{gopalan2021multicalibrated}. 
\cite{kim2019multiaccuracy} showed that retraining the last layer of a DNN with cross-entropy guarantees multiaccuracy with respect to the features in the penultimate layer; this can be seen as a consequence of the result of \cite{op2} for linear models.

There is a long history studying calibration (not MC) which can be viewed as a special case of MC where the groups are defined by $f(x)$ only, i.e., $c(f(x))$, ignoring the features $x$ itself. 
In particular, a set of practical studies that have found that large NNs are often calibrated ``out of the box'' despite being optimized solely for loss \citep{minderer2021revisiting,hendrycksaugmix,karandikar2021soft,desai2020calibration,carrell2022calibration}. 
The work of \citet{calibgap} attempts to explain this phenomenon by proving that there is a tight connection between the {\em smooth calibration error} \citep{kakade-foster,utc} and the loss reduction that is obtainable by using Lipshcitz post-processing of the predicitons. The upshot is that models whose predictions cannot be improved by such post-processing have small calibration error. They speculate that this explains why some large NNs are calibrated out of the box.
\eat{

functions whose loss cannot be improved much by (a simpler) post-processing are exactly those that are close to calibrated.
They then speculate that well-trained neural networks are likely to be loss-optimal in this sense, and thus calibrated.
The present work can  be seen as pursuing a similar line of reasoning, but in the more general setting of multicalibration. }
\eat{
The recent work of \citep{calibgap} studies the tradeoff between calibration error and loss minimization. They show that this tradeoff only manifests when the class $\mC'$ over which we optimize is not sufficiently expressive. They show that for sufficiently powerful $\mC'$, {\em proper} loss minimization does indeed imply calibration. This leaves open the possibility that such a result holds for multicalibration for a suitable choice of $\mC'$. 
}

The work of \cite{kim2019multiaccuracy} showed that the performance of image classifiers on demographic subgroups can be improved by ensuring multiaccuracy.  We are not aware of experimental work measuring the degree of multicalibration for large neural networks on massive training sets. There are numerous works on the representation ability of NNs \cite[e.g.,][]{hornik1989multilayer} and about their accuracy and generalization \citep[e.g.,][]{zhang2021understanding,nagarajan2019uniform}, too large to survey here.

\subsection{Paper Organization} Section \ref{sec:prelims} defines the setting and mathematical preliminaries. Section \ref{sec:analysis} gives our main results, with the regularization analysis covered in Section \ref{sec:reg}. In \Cref{sec:lower} we show a matching lower bound for families of juntas. Finally, Section \ref{sec:discussion} discusses risks and limitations of the present work.

\section{Preliminaries}\label{sec:prelims}

We assume $\X \subseteq \R^d$ is a Euclidean space in some dimension $d$, and focus on binary outcomes (labels) in $\Y= \bits$. We can also allow for $\Y = [0,1]$, in which case the multicalibration guarantee we achieve is called mean multicalibration \citep{jung2020moment}, or the multiclass setting with $l > 2$ distinct labels. We focus on the binary outcome setting for simplicity. We assume an arbitrary, unknown joint distribution $\mD$ on $\X \times \Y$.
For simplicity we focus on the expected squared loss of predictors $f:\X\to [0,1]$ defined as follows:
$$\eloss(f) := \E\nolimits_{(x,y)\sim\mD}\left[\bigl(y-f(x)\bigr)^2\right].$$
Since only one distribution $\mD$ is used throughout the analysis, we assume it is fixed and omit it from our notation, e.g., writing $\eloss(f)$ rather than $\eloss_\mD(f)$ and omitting it from the expectation when clear from context.

We let $\NN_n$ denote the family of fully-connected Neural Networks with the standard ReLU activations and exactly $n$ nodes, which map inputs from $\X$ to output values in $[0,1]$. The ReLU activation computes $\relu(z):=\max(0, z)$ on input $z$. Thus $\NN_n$ is defined by a Directed Acyclic Graph (DAG) with $n$ nodes, where the activation of a node with $d_\text{in}$ inputs is $a=\relu(w \cdot a_\text{in} + b)$ where $a_\text{in}\in\R^{d_\text{in}}$ is the vector of activations of its inputs, $w\in\R^{d_\text{in}}$ is a vector of equal dimension, and $b \in \R$ is a bias term. 
We also define $\NN_0 =\{0\}$, i.e., the constant $0$ function is defined to be computed by a 0-node NN. The set $\NN_n$ is monotonically non-decreasing with $n$ since we can trivially add identity nodes to compute the same function.

Enforcing an output in $[0,1]$ is added to the definition for convenience and can easily be achieved by wrapping the output $z$ in two ReLUs: 
\begin{equation}\label{eq:clip}
\forall z \in \R, \quad \clip(z):=\relu(z-\relu(z-1)) =\max(0,\min(1,z))\in [0,1].
\end{equation}

Similarly to the definition of $\NN_n$, we use $\NN_k^*$ to denote the family of fully-connected Neural Networks with the standard ReLU activations and exactly $k$ nodes, which map inputs from $\X\times [0,1]$ to output values in $[-1,1]$. To get negative output values, we allow the output node to drop the ReLU transformation.

The results in this paper can be extended by enforcing various architectures, such as a NN with a given number of hidden layers, but we present the results for a general feed-forward architecture for simplicity.

\section{Multicalibration from Loss Minimization}\label{sec:analysis}
In this section, we prove \Cref{thm:informal-main,thm:sequence} showing that for the family of neural networks of a given size as well as other families that satisfy closeness under composition, loss minimization over the family implies multicalibration. 

We start with a key observation used in essentially all previous boosting-style algorithms for multicalibration:
\begin{lemma}[{Loss reduction from MC violation \citep[see e.g.][]{hkrr2018,mc-boosting,calibgap}}]\label{lem:standard}
Let $f:\X\to [0,1]$ be a predictor and $c:\X\times [0,1]\to [-1,1]$ be an auditor function. For a distribution $\mD$ over $\X\times \{0,1\}$, define $\beta\in [-1,1]$ by
\[
\beta:=\E\nolimits_{(x,y)\sim \mD}\left[c\bigl(x, f(x)\bigr) \cdot \bigl(y-f(x)\bigr)\right].
\]
Define a new predictor $h:\X\to [0,1]$ such that $h(x):=\clip\bigl(f(x) + \beta c(x,f(x))\bigr)$ for every $x\in \X$. Then $\eloss(h) \le \eloss(f) - \beta^2$.
\end{lemma}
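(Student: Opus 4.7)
The plan is to prove the pointwise inequality that clipping cannot increase the squared loss, then expand and use the definition of $\beta$.

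First I would observe that since $y\in\{0,1\}\subseteq[0,1]$, the clip operation can only move the prediction closer to $y$. Concretely, for any $z\in\R$ and any $y\in[0,1]$, $(\clip(z)-y)^2\le(z-y)^2$, because the projection of $z$ onto the convex set $[0,1]$ is a contraction toward any point in $[0,1]$. Applying this pointwise with $z=f(x)+\beta c(x,f(x))$ gives
\[
\eloss(h)=\E\bigl[(h(x)-y)^2\bigr]\le\E\bigl[(f(x)+\beta c(x,f(x))-y)^2\bigr].
\]

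Next I would expand the right-hand side. Writing $r:=y-f(x)$ for the residual and $c:=c(x,f(x))$ for brevity,
\[
(f(x)+\beta c-y)^2=(r-\beta c)^2=r^2-2\beta c r+\beta^2 c^2.
\]
Taking expectations, the first term is exactly $\eloss(f)$. For the middle term, the definition of $\beta$ gives $\E[cr]=\E[c(x,f(x))(y-f(x))]=\beta$, so this contributes $-2\beta^2$. For the last term, since $c(x,f(x))\in[-1,1]$ we have $c^2\le 1$ pointwise, hence $\beta^2\E[c^2]\le\beta^2$.

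Putting these together yields
\[
\eloss(h)\le\eloss(f)-2\beta^2+\beta^2\E[c^2]\le\eloss(f)-\beta^2,
\]
which is the claim. There is no real obstacle here; the only step requiring a tiny bit of care is verifying the clipping contraction, which is immediate because $[0,1]$ is convex and $y$ lies in it. The proof is essentially a one-variable optimization in $\beta$ made rigorous: the value $\beta=\E[c(x,f(x))(y-f(x))]$ is exactly the minimizer of the quadratic $\beta\mapsto-2\beta\E[cr]+\beta^2\E[c^2]$ when $\E[c^2]=1$, and the bound $\E[c^2]\le 1$ is what allows us to use the convenient choice $\beta=\E[cr]\in[-1,1]$ rather than the exact minimizer.
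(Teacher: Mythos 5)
Your proof is correct and is essentially identical to the paper's: both expand $(y - f(x) - \beta c(x,f(x)))^2$, use linearity and the definition of $\beta$ to get $\eloss(f) - 2\beta^2 + \beta^2\E[c^2]$, bound $\E[c^2] \le 1$, and invoke the nonexpansiveness of projection onto $[0,1]$ (you apply the clipping step first, the paper applies it last, which is immaterial). Nothing to add.
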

\begin{proof}
Define $g(x):=f(x)+\beta c(x, f(x))$, so $h(x)=\clip(g(x))$. Then,
    \begin{align*}
        \eloss(g) &= \E\left[(y-f(x)-\beta c(x, f(x)))^2\right]\\
        &= \E\left[(y-f(x))^2-2 \beta c(x, f(x))(y-f(x)) + \beta^2 c^2(x, f(x))\right]\\
        &= \eloss(f) -2 \beta \E\left[c(x, f(x))(y-f(x))\right] + \beta^2 \E\left[c^2(x, f(x))\right]\\
        &= \eloss(f) - 2\beta^2 + \beta^2 \E\left[c^2(x, f(x))\right].
    \end{align*}
    Since $c^2(x, f(x)) \le 1$, this implies that $\eloss(g) \le \eloss(f)-\beta^2$. Finally, note that for any $(x, y)$, $(y-h(x))^2\le (y-g(x))^2$ since $y\in \bits$ and $h(x)$ is the projection of $g(x)$ onto the closest point in $\zo$, thus $\eloss(h) \le \eloss(g).$
\end{proof}

We prove \Cref{thm:sequence} using \Cref{lem:standard}:
\begin{proof}[Proof of \Cref{thm:sequence}]
For each $n$, define $\opt_n$ to be the infimum of $\eloss(f)$ over $f\in \mF_n$. We first show that for all but at most $k/\alpha$ choices of $n$, 
\begin{equation}
\label{eq:small-drop}
\opt_n \le \opt_{n + k} + \alpha.
\end{equation}
By our assumption, we have $\mF_0\subseteq \mF_1 \subseteq \cdots$, so $\opt_n$ is non-increasing in $n$. It is also clear that $\opt_n \in [0,1]$. 
For $j\in \{1,\ldots,k\}$, suppose we can find $m$ different nonnegative integers $n_1 < \cdots < n_m$ such that each $n_i$ violates \eqref{eq:small-drop} (i.e., $\opt_{n_i} > \opt_{n_i + k} + \alpha$) and satisfies $n_i \equiv j \ \mathrm{mod}\ k$. It is clear that $n_i + k \le n_{i+1}$, and thus $\opt_{n_i + k} \ge \opt_{n_{i+1}}$. Therefore,
\[
1 \ge \opt_{n_1} - \opt_{n_m + k} \ge \sum_{i=1}^m (\opt_{n_i} - \opt_{n_i + k}) \ge m \alpha,
\]
which implies $m \le 1/\alpha$. That is, there are at most $1/\alpha$ choices of $n$ that violate \eqref{eq:small-drop} and satisfy $n \equiv j \ \mathrm{mod}\ k$. Summing up over $j$, we know that there are at most $k/\alpha$ choices of $n$ that violate \eqref{eq:small-drop}.

It remains to show that for every $n$ satisfying \eqref{eq:small-drop}, for every $\varepsilon\in (0,1)$, every $\varepsilon$-loss-optimal $f\in \mF_n$ is $O(\sqrt{\alpha + \varepsilon})$-MC w.r.t.\ $\mC$. For $c\in \mC$, define $\beta:=\left[c\bigl(x, f(x)\bigr) \cdot \bigl(y-f(x)\bigr)\right]$ and $h(x):=\clip \bigl(f(x) + \beta c(x,f(x))\bigr)$ as in \Cref{lem:standard}. By our assumption, $h\in \mF_{n+k}$. Therefore, by \Cref{lem:standard},
\[
\beta^2 \le \eloss(f) - \eloss(h) \le \eloss(f) - \opt_{n+k} \le \eloss(f) - \opt_n + \alpha \le \varepsilon + \alpha.
\]
This implies $|\beta| \le \sqrt{\alpha + \varepsilon}$. Since this holds for any $c\in \mC$, the predictor $f$ must be $(\mC, \sqrt{\alpha + \varepsilon})$-MC.
\end{proof}
We prove \Cref{thm:informal-main} using \Cref{thm:sequence} and the following basic property of neural networks:
\begin{lemma}[Neural networks are closed under composition]
\label{lm:composition}
Let $f\in \NN_n$ and $c\in \NN_k^*$. Define predictor $h:\X\to [0,1]$ such that $h(x):=\clip\bigl(f(x) + \beta c(x,f(x))\bigr)$ for every $x\in \X$. Then $h\in \NN_{n + k + 2}$.
\end{lemma}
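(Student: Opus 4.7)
The plan is to construct an explicit feed-forward DAG with $n+k+2$ nodes that computes $h$, obtained by wiring the given $f$-network and $c$-network together and then appending the two-node \texttt{clip} gadget from equation~\eqref{eq:clip}. First I would take the $n$-node DAG for $f\in \NN_n$ as a sub-DAG, treating its output node's activation as $f(x)$. Next I would take the $k$-node DAG for $c\in \NN_k^*$ and wire its inputs so that they receive $x$ (directly from the input layer) and $f(x)$ (from the output node of the $f$-sub-DAG); because its inputs come from $\X \times [0,1]$ and $f(x)\in[0,1]$, this is a legal wiring and its output node now produces $c(x,f(x))$. No new nodes are spent on feeding $\beta$ or on forming the quantity $z := f(x)+\beta c(x,f(x))$, since $\beta\in[-1,1]$ is a fixed constant that can be absorbed into the weights of whichever downstream ReLU node consumes $c(x,f(x))$.

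Then I would append exactly two ReLU nodes to realize $\clip(z)$ via the identity $\clip(z)=\relu(z-\relu(z-1))$. The first appended node takes as inputs $f(x)$ and $c(x,f(x))$ with weights $1$ and $\beta$ and bias $-1$, computing $\relu(z-1)$. The second appended node takes as inputs $f(x)$, $c(x,f(x))$, and the first appended node with weights $1$, $\beta$, $-1$ and bias $0$, computing $\relu\bigl(z-\relu(z-1)\bigr)=\clip(z)=h(x)$. Totaling the sub-DAGs gives exactly $n+k+2$ nodes, and the final output lies in $[0,1]$ as required for membership in $\NN_{n+k+2}$.

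The only mild subtlety I expect to address is bookkeeping at the $c$-network's output node, which under the definition of $\NN_k^*$ is allowed to drop its ReLU in order to produce values in $[-1,1]$. Since the output of that node is consumed only linearly by the two clip nodes downstream, the pre-activation $w\cdot a_{\mathrm{in}}+b$ can be folded directly into the weights and biases of those two ReLU nodes (replacing the input "$c(x,f(x))$" in my wiring above by the inputs $a_{\mathrm{in}}$ of that node, with weights scaled by $\beta w$ and a shifted bias). This shows the construction uses no more than $n+k+2$ ReLU nodes under either convention, completing the proof.
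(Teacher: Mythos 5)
Your proposal is correct and follows essentially the same route as the paper: reuse the $n$-node DAG for $f$ and the $k$-node DAG for $c$, append two ReLU nodes realizing $\clip(z)=\relu(z-\relu(z-1))$ with $\beta$ absorbed into downstream weights, and handle the non-ReLU output node of $c$ by folding its affine map into the two clip nodes (the paper's footnote does the same, phrased as ``removing node $p$'' and rerouting to its inputs).
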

\begin{proof}
    Observe that $h(x)=\clip\bigl(f(x) + \beta c(x,f(x))\bigr)$ is computed by 
    \begin{equation}\label{eq:h}
    h(x)=\relu\bigl(f(x)+\beta c(x, f(x)) - \relu(f(x) + \beta c(x, f(x))-1)\bigr). \\
\end{equation}
 as can be seen through Eq.\ (\ref{eq:clip}). Then observe that Eq. (\ref{eq:h}) is indeed a representation of a $(n +k +2)$-node NN: $n$ nodes to compute $f(x)$, $k$ nodes to compute $c(x, f(x))$ and then the 2 additional $\relu$ nodes as described (note that $f(x)$ and $c(x, f(x))$ are both re-used without recomputing them).\footnote{We allow the output node of the $k$-node network computing $c(x,f(x))$ to drop the ReLU transformation because $c(x,f(x))\in [-1,1]$ could be a negative value which cannot be the output of a ReLU transformation. 
Thus the $(n + k + 2)$-node network we construct to compute $h$ may contain an internal node $p$ that does not apply the ReLU transformation. This can be easily fixed by noting that node $p$ computes its output $a$ by  $a = w \cdot a_\text{in} + b$ where $a_\text{in}$ is a vector consisting of outputs of previous nodes. We can remove node $p$ from the $(n + k + 2)$-node network without changing the final output because any node taking $a$ as input can alternatively take $a_{\text{in}}$ as input and reconstruct $a$ using the affine transformation $w \cdot a_\text{in} + b$.}
Thus $h \in \NN_{n +k + 2}$ as claimed.
\end{proof}
\begin{proof}[Proof of \Cref{thm:informal-main}]
The theorem follows immediately by combining \Cref{thm:sequence} and \Cref{lm:composition}.
\end{proof}

\subsection{Regularization}\label{sec:reg}

We show that the regularized approach discussed in \Cref{sec:srm} allows us to avoid the unlucky sizes $n$.
Specifically, we prove the following theorem about general predictor families in the setting of \Cref{thm:sequence} and then specialize it to neural networks in \Cref{thm:reg}.
\begin{theorem}
\label{thm:sequence-reg}
In the setting of \Cref{thm:sequence}, fix $\alpha > 0$ and
consider $n\in \Z_{\ge 0}$ and $f\in \mF_n$ that minimize $\eloss(f) + \alpha n / k$ up to error $\varepsilon$, where $\alpha n / k$ is a regularization term that depends on $n$. That is, 
\begin{equation}
\label{eq:opt-reg}
\eloss(f) + \alpha n / k \le \inf_{n'\in \Z_{\ge 0}, f'\in \mF_{n'}}\eloss(f') + \alpha n' / k + \varepsilon.
\end{equation}
Then $f$ is $(\mC, \sqrt{\alpha + \varepsilon})$-MC.
\end{theorem}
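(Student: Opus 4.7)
The plan is to argue by contradiction, mimicking the proof of Theorem \ref{thm:sequence} but using the regularization term instead of the averaging/telescoping argument on $\opt_n$. Concretely, I would suppose that $f \in \mF_n$ violates $(\mC, \sqrt{\alpha+\varepsilon})$-MC, exhibit a witness $c\in \mC$ with $|\beta| > \sqrt{\alpha+\varepsilon}$ where $\beta \defeq \E[c(x,f(x))(y-f(x))]$, and then use it to construct a competitor $(n+k, h)$ to the near-optimum $(n,f)$ in the regularized objective \eqref{eq:opt-reg}.

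The key construction is exactly the one furnished by \Cref{lem:standard}: define $h(x) \defeq \clip(f(x) + \beta c(x,f(x)))$. Two ingredients then combine. First, by the closure-under-composition hypothesis of \Cref{thm:sequence}, $h \in \mF_{n+k}$, so $(n+k, h)$ is a valid pair to plug into the right-hand side of \eqref{eq:opt-reg}. Second, \Cref{lem:standard} gives the loss reduction $\eloss(h) \le \eloss(f) - \beta^2 < \eloss(f) - (\alpha + \varepsilon)$.

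Now I would match these against the $\varepsilon$-optimality of $(n,f)$. Plugging $(n+k, h)$ into \eqref{eq:opt-reg} yields
\[
\eloss(f) + \alpha n/k \;\le\; \eloss(h) + \alpha(n+k)/k + \varepsilon \;=\; \eloss(h) + \alpha n/k + \alpha + \varepsilon,
\]
which simplifies to $\eloss(f) - \eloss(h) \le \alpha + \varepsilon$, contradicting the strict inequality $\eloss(f) - \eloss(h) > \alpha + \varepsilon$ derived above. Hence no such $c$ exists and $f$ is $(\mC, \sqrt{\alpha+\varepsilon})$-MC.

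There is no real obstacle: the regularization coefficient $\alpha/k$ has been calibrated precisely so that the $+k$ jump in size costs exactly $\alpha$ in the penalty, which is the same $\alpha$ that appeared in \Cref{thm:sequence} as the ``allowed drop'' in $\opt_n$ over $k$ steps. The only thing worth being slightly careful about is that \Cref{thm:sequence} states its composition property only for $\beta \in [-1,1]$, but this is automatic here since $c$ takes values in $[-1,1]$ and $y-f(x) \in [-1,1]$, so $\beta \in [-1,1]$. The argument does not need to consider any ``lucky'' or ``unlucky'' values of $n$ since the regularization now does the bookkeeping uniformly.
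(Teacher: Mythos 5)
Your proof is correct and takes essentially the same approach as the paper: both fix $n' = n+k$ in \eqref{eq:opt-reg}, apply the closure-under-composition hypothesis to place the updated predictor $h$ from \Cref{lem:standard} into $\mF_{n+k}$, and use the loss-reduction bound to conclude $\beta^2 \le \alpha + \varepsilon$ (the paper argues directly for every $c$, while you phrase it as a contradiction, but these are logically identical). Your extra remark that $\beta \in [-1,1]$ is automatic is a valid and worthwhile sanity check, though the paper leaves it implicit.
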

\begin{remark}
\label{remark:reg}
We can always choose $n \le k/\alpha$ in \eqref{eq:opt-reg}. If $n > k/\alpha$, we can replace $n$ by $\tilde n = 0$ and replace $f$ by an arbitrary $\tilde f\in \mF_0$, and get an improvement: $\eloss(\tilde f) + \alpha \tilde n/k \le 1 < \alpha n / k \le \eloss(f) + \alpha n / k$.
\end{remark}
\begin{proof}[Proof of \Cref{thm:sequence-reg}]
Fixing $n' = n + k$ in \eqref{eq:opt-reg}, we have
\[
\eloss(f) + \alpha n / k \le \inf_{f'\in \mF_{n + k}}\eloss(f') + \alpha (n + k)/ k + \varepsilon = \opt_{n + k} + \alpha (n + k)/k + \varepsilon.
\]
This implies that $\eloss(f) \le \opt_{n + k} + \alpha + \varepsilon$. The rest of the proof is identical to the proof of \Cref{thm:sequence}. Specifically, for $c\in \mC$, defining $\beta$ and $h$ as in \Cref{lem:standard}, we get $\beta^2 \le \eloss(f) - \eloss(h) \le \eloss(f) - \opt_{n + k} \le \alpha + \varepsilon$, which implies $|\beta| \le \sqrt{\alpha + \varepsilon}$, as desired.
\end{proof}
Combining \Cref{thm:sequence-reg} and \Cref{lm:composition}, we get the following theorem about neural networks:
\begin{theorem}[Size-regularized NNs]\label{thm:reg}
Let $\mD$ be a distribution over $\X\times \{0,1\}$. For every $k\in \Z_{> 0}$ and every $\alpha > 0$,
consider $n\in \Z_{\ge 0}$ and $f\in \NN_n$ that minimize $\eloss(f) + \alpha n / (k + 2)$ up to error $\varepsilon$. That is,
\[
\eloss(f) + \alpha n / (k + 2) \le \inf_{n'\in \Z_{\ge 0}, f'\in \NN_{n'}}\eloss(f') + \alpha n' / (k + 2) + \varepsilon.
\]
Then $f$ is $(\NN_k^*, \sqrt{\alpha + \varepsilon})$-MC.
\end{theorem}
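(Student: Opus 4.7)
The plan is to deduce Theorem~\ref{thm:reg} as a direct specialization of Theorem~\ref{thm:sequence-reg} to the neural-network setting, with Lemma~\ref{lm:composition} supplying the required closure-under-composition hypothesis. Concretely, I would instantiate Theorem~\ref{thm:sequence-reg} with the nested sequence $\mF_n \defeq \NN_n$ and with the role of the parameter $k$ in Theorem~\ref{thm:sequence-reg} played by $k + 2$, where $k$ is the auditor size in Theorem~\ref{thm:reg}. The auditor class is $\mC \defeq \NN_k^*$.

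To apply Theorem~\ref{thm:sequence-reg}, two hypotheses from Theorem~\ref{thm:sequence} must be verified. First, $\emptyset \neq \mF_0 \subseteq \mF_1 \subseteq \cdots$: this holds because $\NN_0 = \{0\}$ is nonempty by definition, and $\NN_n \subseteq \NN_{n+1}$ since identity-like nodes can be appended without changing the computed function (as noted in Section~\ref{sec:prelims}). Second, for every $f \in \NN_n$, every $c \in \NN_k^*$, and every $\beta \in [-1,1]$, the composed predictor
\[
f'(x) = \clip\bigl(f(x) + \beta\, c(x, f(x))\bigr)
\]
must lie in $\NN_{n + (k + 2)}$. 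This is exactly the conclusion of Lemma~\ref{lm:composition}, where the two extra ReLU nodes account for the $\clip$ operation via Eq.~\eqref{eq:clip}. With both hypotheses checked, Theorem~\ref{thm:sequence-reg} (with $k$ replaced by $k + 2$) gives that any $(n, f)$ with $f \in \NN_n$ satisfying
\[
\eloss(f) + \alpha n / (k + 2) \le \inf_{n' \in \Z_{\ge 0},\ f' \in \NN_{n'}} \eloss(f') + \alpha n' / (k + 2) + \varepsilon
\]
must have $f$ being $(\NN_k^*, \sqrt{\alpha + \varepsilon})$-MC, which is the conclusion of Theorem~\ref{thm:reg}.

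There is no substantive obstacle here: the proof is a one-line combination of two already-proved results. The only point requiring care is bookkeeping of the composition overhead. Lemma~\ref{lm:composition} adds $k + 2$ nodes per composition step (not $k$), because two extra ReLUs are needed to implement $\clip$, so the regularization coefficient must be $\alpha / (k + 2)$ rather than $\alpha / k$; using $\alpha / k$ instead would leave a gap of a multiplicative constant in the final bound. Once this is accounted for, the remark following Theorem~\ref{thm:sequence-reg} (that the optimal $n$ can be taken $\le (k+2)/\alpha$, i.e., the minimizer is well-defined over a finite range) transfers to the NN setting verbatim.
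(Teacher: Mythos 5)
Your proposal is correct and matches the paper's own proof exactly: the paper derives Theorem~\ref{thm:reg} by combining Theorem~\ref{thm:sequence-reg} (instantiated with $\mF_n = \NN_n$, $\mC = \NN_k^*$, and the composition parameter $k+2$) with Lemma~\ref{lm:composition}. Your careful accounting of the $k+2$ overhead in the regularization coefficient is precisely the bookkeeping the paper performs.
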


By \Cref{remark:reg}, the theorem above gives a way of finding an $n \le (k+2)/\alpha$-node NN that is $\sqrt{\alpha + \varepsilon}$-MC with respect to $k$-node NNs.

\paragraph{Finite training sets.} If one has a training set consisting of $m$ examples drawn i.i.d.\ from $\mD$, a natural approach is to minimize training loss. If $m$ is sufficiently large compared to the number of NN parameters, which is polynomial in $n$, generalization bounds guarantee every $f \in \NN_n$ will have training loss close to its true loss $\eloss(f)$, and thus any $\hat{f}_n$ which minimizes training error (still ignoring computation time) will have, say, $\eloss(\hat{f}_n) \le  \min_{\NN_n} \eloss(f) + \varepsilon$ for $\varepsilon = O(\alpha)$. This can, in turn, be used with the above theorems to show that, outside a set of $O(k/\alpha)$ NN sizes, all NNs that are optimal on the training set will be $O(\sqrt\alpha)$-MC with respect to $k$-node NNs. While such an analysis would be straightforward, we find it unsatisfactory given the fact that current generalization bounds, though correct, seem to be too loose to capture the performance of many NN learning algorithms in practice (e.g. \citep{zhang2021understanding,nagarajan2019uniform}).
However, our results above may be relevant in settings where generalization is guaranteed --- for example,
when models are trained using Stochastic Gradient Descent (SGD) for only one-epoch (one pass through the training data).
In this case, the learning algorithm can be thought of as optimizing via SGD directly on the population loss,
without considering a finite train set.
If we heuristically believe that, when run for long enough, the optimization reaches close to a population loss minima within the architecture family, then our results imply most resulting minima will also be multicalibrated.
Moreover, the one-epoch setting may not be far from reality, since most modern large language models
are indeed trained for only one epoch \citep{GPT3,biderman2023pythia}, and
there is evidence that generalization in the multi-epoch setting can be understood
via the one-epoch setting \citep{nakkirandeep}.

\section{A Lower Bound}\label{sec:lower}

We would like to understand the tightness of the analysis we have presented. Unfortunately, this is challenging due to the complex structure of NNs. Instead, this section provides some evidence of the tightness of our analysis at least \textit{using our current methods}. To do this, we consider another natural class of functions namely {\em $k$-juntas} and a weaker notion of multigroup fairness called multiaccuracy (MA), to which our analysis applies. In this setting, we show a sharp result, proving that our bounds are tight up to constant factors.

The class of \textit{juntas}, functions that depend on a small subset of inputs, has been well studied in computational learning theory \citep{MOSSEL2004421, FISCHER2004753}. In this section, we show that juntas satisfy a similar property to NNs in that, for most sizes, minimizing loss for juntas also implies MA with respect to smaller juntas. We also prove a \textit{lower bound} for showing that our bounds are tight up to constant factors.

For our results in this section, we choose the domain $\X$ to be the Boolean cube $\{-1,1\}^m$ for a dimension $m\in \mathbb Z_{>0}$. That is, every $x\in \X$ can be written as $x = (x_1,\ldots,x_m)$ where $x_i\in \{-1,1\}$ for every $i = 1,\ldots,m$. For a positive integer $k$ and a function $f:\X\to \R$, we say $f$ is a \emph{$k$-junta} if there exist $i_1,\ldots,i_k\in \{1,\ldots,m\}$ and $g:\{-1,1\}^k\to \R$ such that $f(x) = g(x_{i_1},\ldots,x_{i_k})$ for every $x\in \X$. We say a function $f:\X\to \R$ is a $0$-junta if $f$ is a constant function. We use $\junta_k$ (resp.\ $\junta_k^*$) to denote the family of $k$-juntas that map inputs in $\X = \{-1,1\}^m$ to output values in $[0,1]$ (resp.\ $[-1,1]$).

We consider multiaccuracy, which is a weaker notion than multicalibration, where the auditor functions $c$ do not have access to the predictions $f(x)$:

\begin{definition}[Multiaccuracy \citep{hkrr2018}]
\label{def:ma}
Let $\gamma > 0$ and $\mC$ be a class of auditor functions $c:\X \to [-1,1]$.
The predictor $f:\X\to [0,1]$ is $(\mC, \gamma)$-multiaccurate or $(\mC, \gamma)$-MA if for all $c \in \mC$,
$$\Big|\E\nolimits_{(x,y)\sim\mD}\left[c(x) \cdot \bigl(y-f(x)\bigr)\right]\Big| \le \gamma.
$$
\end{definition}

We prove the following theorem for juntas which is similar to our result about neural networks in \Cref{thm:informal-main} (restating the theorem from the introduction):

\juntaupper*
The proof works exactly like the proof of \Cref{thm:informal-main} using the following observation. For $f\in \junta_n, c\in \junta_k^*$, and $\beta\in \R$, define function $h:\X\to [0,1]$ as
\[
h(x) = \clip(f(x) + \beta c(x)),
\]
then $h\in \junta_{n + k}$. The following theorem gives a lower bound that matches \Cref{thm:upper-bound} up to a constant factor.
\begin{theorem}
\label{thm:lower-bound}
For every $k\in \mathbb Z_{> 0}$ and every $\alpha\in (0,1/(4\pi^2)]$, there exist $m\in \mathbb Z_{> 0}$, a distribution $\mD$ over $\X\times \Y$ with $\X = \{-1,1\}^m$ and $\Y = \{0,1\}$, and at least $k/(6\pi^2\alpha)$ distinct nonnegative integers $n$ such that any model $f_n\in \junta_n$ is \emph{not} $(\junta_k^*,\sqrt{\alpha})$-MA.
\end{theorem}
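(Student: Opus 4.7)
}
The plan is a majority-based hard construction. Take $\X = \{-1,1\}^m$ with $x$ uniform, and set the label deterministically to $y = (1 + \maj_m(x))/2 \in \{0,1\}$, where $m$ is a carefully chosen odd integer. For concreteness I assume $k$ is also odd; the case of even $k$ reduces to the odd case by using a $(k-1)$-variable auditor, adjusting constants only mildly. The key observation is: for any $n$-junta $f_n$ supported on $S$ with $|S|\le n\le m-k$, there exists a subset $T\subseteq[m]\setminus S$ with $|T|=k$, and the auditor $c(x) = \maj_k(x_T)\in\{-1,1\}\subseteq[-1,1]$ lies in $\junta_k^*$ and is uncorrelated with $f_n$.

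More precisely, since $T\cap S=\emptyset$ and $x$ is uniform, $x_T$ and $x_S$ are independent, and $\mathbb{E}[\maj_k(x_T)]=0$ by the odd symmetry of majority. These facts together give
\[
\mathbb{E}[c(x)(y-f_n(x))] \;=\; \tfrac{1}{2}\,\mathbb{E}[\maj_k(x_T)\,\maj_m(x)],
\]
a quantity that does \emph{not} depend on $f_n$ at all. The next step is a Fourier lower bound on the right-hand side. Expand under the uniform measure: $\mathbb{E}[\maj_k(x_T)\maj_m(x)] = \sum_{U\subseteq T,\,|U|\text{ odd}} \widehat{\maj}_k(U)\widehat{\maj}_m(U)$. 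By permutation symmetry $\widehat{\maj}_r(U)$ depends only on $|U|$, and its sign depends only on $|U|$ as well (namely the sign equals $(-1)^{(|U|-1)/2}$ for odd $|U|$, as one can check from small cases or the explicit formula). Consequently, for every fixed $U$ the product $\widehat{\maj}_k(U)\widehat{\maj}_m(U)$ is non-negative, so all terms in the sum contribute positively and I can lower bound the whole sum by its level-$1$ piece $k\cdot\widehat{\maj}_k(\{1\})\widehat{\maj}_m(\{1\})$.

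Now invoke the exact formula $\widehat{\maj}_r(\{1\})=\binom{r-1}{(r-1)/2}/2^{r-1}$ together with the non-asymptotic Stirling-type inequality $\binom{2j}{j}/4^j\ge 1/\sqrt{\pi(j+\tfrac12)}$, which yields $\widehat{\maj}_r(\{1\})\ge\sqrt{2/(\pi r)}$ for all odd $r\ge 1$. Hence $\mathbb{E}[\maj_k(x_T)\maj_m(x)]\ge (2/\pi)\sqrt{k/m}$ and the MA violation is at least $(1/\pi)\sqrt{k/m}$. To force this above $\sqrt\alpha$ I pick $m$ to be an odd integer with $m\le k/(\pi^2\alpha)$ (and as large as possible in that range). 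The bad values of $n$ are $\{0,1,\ldots,m-k\}$, of size $m-k+1$, and using $\alpha\le 1/(4\pi^2)$ a routine arithmetic check shows $k/(\pi^2\alpha)-k+1\ge k/(6\pi^2\alpha)$ (equivalently $5k/(6\pi^2\alpha)\ge k-1$, which holds since $5/(6\pi^2\alpha)\ge 10/3>1$), so we get at least $k/(6\pi^2\alpha)$ bad $n$'s.

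The main obstacle is getting the explicit constant right without leaking it in the asymptotics. The asymptotic level-$1$ Fourier weight of $\maj_m$ is $2/\pi$, matching the target factor $1/\pi$ in the MA violation, but the argument requires a \emph{non-asymptotic} lower bound on $\widehat{\maj}_r(\{1\})$ (supplied by the central-binomial Stirling inequality above) and the sign-parity fact that each product $\widehat{\maj}_k(U)\widehat{\maj}_m(U)$ is non-negative (so dropping to level $1$ strictly strengthens the inequality). The rest is bookkeeping: verifying that the interval $[k/(6\pi^2\alpha)+k-1,\,k/(\pi^2\alpha)]$ for $m$ is non-empty and contains an odd integer whenever $\alpha\le 1/(4\pi^2)$.
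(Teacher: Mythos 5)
Your proposal matches the paper's proof essentially step for step: the same uniform-$\maj_m$ hard distribution, the same choice of a $\maj_k$ auditor on coordinates disjoint from the junta's support, the same Plancherel decomposition with the sign-parity observation that lets you drop down to the level-$1$ term, the same quantitative bound $\widehat{\maj}_r(\{1\}) > \sqrt{2/(\pi r)}$, and the same choice of $m$ and counting of bad $n$. The only cosmetic differences are that the paper proves the level-$1$ Fourier bound via the Wallis product rather than your central-binomial Stirling inequality, and it handles even $k$ by explicitly introducing $k_1$ (largest odd integer $\le k$) and carrying it through the arithmetic, where you gesture at the same reduction.
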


Key to our proof of \Cref{thm:lower-bound} is the \emph{majority} function. For an odd positive integer $k$, we define the majority function $\maj:\{-1,1\}^k\to \{-1,1\}$ such that for every $(x_1,\ldots,x_k)\in \{-1,1\}^k$, $\maj(x_1,\ldots,x_k) = 1$ if $x_1 + \cdots + x_k > 0$, and $\maj(x_1,\ldots,x_k) = -1$ otherwise. The following lemma about majority functions follows from standard noise sensitivity bounds \citep[see][]{o2014analysis}. We provide a proof in \Cref{sec:maj} for completeness.

\begin{restatable}{lemma}{majcor}
\label{lm:maj-cor}
For any odd positive integers $m$ and $k$ satisfying $k \le m$,
\[
\E[\maj(x_1,\ldots,x_k)\maj(x_1,\ldots,x_m)] > \frac 2\pi \cdot \sqrt{\frac{k}{m}},
\]
where the expectation is over $(x_1,\ldots,x_m)$ drawn uniformly at random from $\{-1,1\}^m$.\footnote{The constant $2/\pi$ cannot be improved because in the limit where $k, m\to \infty,\sqrt{k/m} \to \rho\in (0,1)$, using the central limit theorem one can show that $\E[\maj(x_1,\ldots,x_k)\maj(x_1,\ldots,x_m)] \to (2/\pi)\arcsin\rho$. Also, it is easy to show that $\E[f(x_1,\ldots,x_k)\maj(x_1,\ldots,x_m)]$ is maximized when $f = \maj$ among all $f:\{-1,1\}^k \to [-1,1]$ \cite[see e.g.][Lemma 13]{GKK08}.}
\end{restatable}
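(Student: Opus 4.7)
My plan is to apply discrete Fourier analysis on the Boolean cube together with a strict Wallis product inequality, extracting the claimed bound from the degree-one level of the Fourier expansion. Viewing both $\maj(x_1,\ldots,x_k)$ and $\maj(x_1,\ldots,x_m)$ as functions on $\{-1,1\}^m$ (the first being constant in $x_{k+1},\ldots,x_m$) and expanding in the Walsh--Fourier basis $\chi_S(x)=\prod_{i\in S}x_i$, I get
\[
\E[\maj(x_1,\ldots,x_k)\,\maj(x_1,\ldots,x_m)] \;=\; \sum_{S\subseteq [k]} \widehat{\maj_k}(S)\,\widehat{\maj_m}(S),
\]
since $\widehat{\maj_k}(S)=0$ whenever $S\not\subseteq[k]$.

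The degree-one contribution is what I will ultimately use as the lower bound. By permutation symmetry, $\widehat{\maj_k}(\{i\}) = \widehat{\maj_k}(\{1\})$ for every $i\in[k]$, and an averaging argument gives $\widehat{\maj_k}(\{1\}) = \tfrac{1}{k}\E[|S_k|] = \binom{k-1}{(k-1)/2}\,2^{-(k-1)}$ where $S_k = x_1+\cdots+x_k$; here I use the standard identity $\E[|S_k|] = k\binom{k-1}{(k-1)/2}2^{-(k-1)}$ for odd $k$ (which itself follows from $\E[|S_k|] = k\,\E[x_1\sign(S_k)]$ and conditioning on $x_1$). The same formula applies to $\maj_m$, so the degree-one part of the Fourier sum equals $k\binom{k-1}{(k-1)/2}\binom{m-1}{(m-1)/2}\,2^{-(k+m-2)}$.

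To turn this into a lower bound on the whole sum, I invoke the classical sign pattern of the Fourier coefficients of majority: for odd $n$, the coefficient $\widehat{\maj_n}(S)$ depends only on $|S|$, vanishes when $|S|$ is even, and for $|S|=2\ell+1$ carries the sign $(-1)^\ell$ independently of $n$ (see O'Donnell, \emph{Analysis of Boolean Functions}, Ch.~5). Hence for every $S\subseteq[k]$ the product $\widehat{\maj_k}(S)\widehat{\maj_m}(S)$ is non-negative, so the Fourier sum is at least its degree-one part. Finally, I apply the strict Wallis inequality $\binom{2n}{n}4^{-n}>\sqrt{2/(\pi(2n+1))}$ (valid for every $n\ge 0$, provable by induction) with $2n+1=k$ and with $2n+1=m$ respectively, producing
\[
\E[\maj(x_1,\ldots,x_k)\,\maj(x_1,\ldots,x_m)] \;>\; k\cdot\sqrt{\tfrac{2}{\pi k}}\cdot\sqrt{\tfrac{2}{\pi m}} \;=\; \tfrac{2}{\pi}\sqrt{\tfrac{k}{m}},
\]
which is exactly the inequality claimed.

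The main obstacle is the sign-alignment step: if one wishes to be fully self-contained rather than quote the textbook, one must derive the sign $(-1)^\ell$ of $\widehat{\maj_n}(S)$ for $|S|=2\ell+1$. The cleanest derivation I know is to write $\maj_n = \sign(S_n)$ and expand the sign function on the odd-integer support of $S_n$ via Krawtchouk polynomials (the discrete analogue of a Hermite expansion), which makes the alternating sign transparent. Every other ingredient --- the Fourier expansion, the degree-one computation, and the strict Wallis bound --- is direct.
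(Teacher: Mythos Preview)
Your proof is correct and follows essentially the same route as the paper: Plancherel on $\{-1,1\}^m$, the nonnegativity of $\widehat{\maj_k}(S)\,\widehat{\maj_m}(S)$ for $S\subseteq[k]$ (citing O'Donnell's text, as the paper does), restriction to the degree-one level with the explicit value $\binom{k-1}{(k-1)/2}2^{-(k-1)}$, and the strict Wallis inequality to finish. The only differences are cosmetic---you supply a bit more justification for the degree-one coefficient and the sign pattern than the paper does.
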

Using the majority function, we define a distribution $\mD$ over $\{-1,1\}^m \times \{0,1\}$ for any odd positive integer $m$ as follows. We first draw $x\in \{-1,1\}^m$ uniformly at random and then set
\[
y = \frac 12(1 + \maj(x))\in \{0,1\}.
\]
We define $\mD$ to be the distribution of $(x,y)$.
\Cref{lm:maj-cor} allows us to prove the following lemma about the distribution $\mD$, which we then use to prove \Cref{thm:lower-bound}.
\begin{lemma}
\label{lm:lower-helper}
Let $k,m$ be odd positive integers satisfying $k \le m$. 
Define $\X:= \{-1,1\}^m$ and define distribution $\mD$ as above.
Let $\alpha> 0$ be a parameter satisfying
\begin{equation}
\label{eq:assumption-alpha}
\alpha \le \frac{k}{\pi^2 m}.
\end{equation}
Then any function $f\in \junta_{m - k}$ is $\emph{not}$ $(\mC,\sqrt\alpha)$-MA with respect to $\mC = \junta_k^*$.
\end{lemma}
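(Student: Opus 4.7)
The plan is, for each $f\in \junta_{m-k}$, to construct an explicit auditor $c\in \junta_k^*$ that witnesses a multiaccuracy violation larger than $\sqrt\alpha$. By the definition of an $(m-k)$-junta, $f$ depends only on coordinates in some set $S\subseteq \{1,\ldots,m\}$ with $|S|=m-k$. Let $T = \{1,\ldots,m\}\setminus S$, which has size exactly $k$ (odd by assumption), and define $c(x) := \maj(x_T)$. Then $c$ is a $k$-junta taking values in $\{-1,1\}\subseteq [-1,1]$, so $c\in \junta_k^*$.

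The key computation is to expand $\E_{(x,y)\sim\mD}[c(x)(y-f(x))]$ using $y=\tfrac12(1+\maj(x_1,\ldots,x_m))$. The term $\E[c(x)f(x)]$ factors as $\E[c(x)]\,\E[f(x)]$, since $c$ and $f$ depend on disjoint blocks of independent Rademacher coordinates, and $\E[c(x)] = \E[\maj(x_T)] = 0$ by the sign-flip symmetry of the majority function on an odd number of inputs. Similarly, the constant-$\tfrac12$ part of $y$ contributes $\tfrac12\E[c(x)] = 0$. What remains is
\[
\E[c(x)(y-f(x))] = \tfrac12\,\E[\maj(x_T)\,\maj(x_1,\ldots,x_m)].
\]

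By the symmetry of the uniform distribution over $\{-1,1\}^m$ under permutations of coordinates, this expectation equals $\tfrac12\E[\maj(x_1,\ldots,x_k)\maj(x_1,\ldots,x_m)]$, to which \Cref{lm:maj-cor} applies and gives a strict lower bound of $\tfrac{1}{\pi}\sqrt{k/m}$. Combining this with the assumption $\alpha \le k/(\pi^2 m)$ yields
\[
\E[c(x)(y-f(x))] > \tfrac{1}{\pi}\sqrt{k/m} \ge \sqrt\alpha,
\]
certifying that $f$ is not $(\mC,\sqrt\alpha)$-MA.

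There is not really a hard step once \Cref{lm:maj-cor} is in hand. The one point that requires care is the choice of $c$: picking majority over the complementary coordinate block $T$ is exactly what simultaneously (a) makes $\E[c(x)f(x)]$ vanish by independence and (b) preserves the correlation with $\maj(x_1,\ldots,x_m)$ needed to invoke \Cref{lm:maj-cor}. Any other $k$-junta auditor would either lose one of these two properties or demand a more delicate analysis. The hypothesis that both $k$ and $m$ are odd — used to make both majority functions well-defined and to ensure $\E[\maj] = 0$ — is what the downstream proof of \Cref{thm:lower-bound} will have to respect when it selects an arithmetic progression of unlucky sizes $n = m-k$.
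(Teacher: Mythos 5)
Your proof is correct and follows essentially the same route as the paper: pick the $k$ coordinates disjoint from the junta's support, use the majority function on those coordinates as the auditor, exploit independence to make $\E[c(x)f(x)]$ vanish, and invoke \Cref{lm:maj-cor}. The only cosmetic difference is that you make the decomposition $y=\tfrac12(1+\maj(x))$ and the vanishing of $\E[\maj(x_T)]$ explicit, whereas the paper folds this into the single claim $\E[y\,\maj(x_{i_1},\ldots,x_{i_k})]>\tfrac1\pi\sqrt{k/m}$; also, one should note that an $(m-k)$-junta depends on \emph{at most} $m-k$ coordinates, so $S$ should be padded to size exactly $m-k$ if necessary (a detail both proofs treat as understood).
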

\begin{proof}
It suffices to show that for any function $f\in \junta_{m - k}$, there exist $i_1,\ldots,i_k\in \{1,\ldots,m\}$ such that
\begin{equation}
\label{eq:lower-helper-1}
\E\nolimits_{(x,y)\sim \mD}[(y - f(x))\maj(x_{i_1},\ldots,x_{i_k})] > \frac 1{\pi}\cdot \sqrt{\frac{k}{m}}.
\end{equation}
By the definition of $f\in \junta_{m - k}(\X)$, there exist $j_1,\ldots,j_{m - k}\in \{1,\ldots,m\}$ and $g:\{-1,1\}^{m-k}$ such that 
\[
f(x) = g(x_{j_1},\ldots,g_{j_{m - k}}) \quad \text{for every }x\in \X.
\]
Now we choose distinct $i_1,\ldots,i_k\in \{1,\ldots,m\}\setminus\{j_1,\ldots,j_{m - k}\}$. We have
\begin{equation}
\label{eq:lower-helper-2}
\E\nolimits_{(x,y)\sim \mD}[f(x)\maj(x_{i_1},\ldots,x_{i_k})] = \E[g(x_{j_1},\ldots,g_{j_{m - k}})\maj(x_{i_1},\ldots,x_{i_k})] = 0,
\end{equation}
where the last equation holds because $(x_{i_1},\ldots,x_{i_k})$ is independent from $(x_{j_1},\ldots,x_{j_{m - k}})$. By \Cref{lm:maj-cor} and our choice of distribution $\mD$,
\begin{equation}
\label{eq:lower-helper-3}
\E\nolimits_{(x,y)\sim \mD}[y\,\maj(x_{i_1},\ldots,x_{i_k})] > \frac{1}\pi\cdot \sqrt{\frac{k}{m}}.
\end{equation}
Combining \eqref{eq:lower-helper-2} and \eqref{eq:lower-helper-3} proves \eqref{eq:lower-helper-1}.
\end{proof}
We are now ready to prove our lower bound \Cref{thm:lower-bound}.
\begin{proof}[Proof of \Cref{thm:lower-bound}]
For any positive integer $k$, define $k_1$ to be the largest odd integer that does not exceed $k$. It is easy to verify that $k_1 \ge k/2 > 0$. For any $\alpha\in (0,1/(4\pi^2)]$, choose $m$ to be the largest odd integer smaller than $k_1/(\pi^2\alpha)$. 
Our assumption that $\alpha \le 1/(4\pi^2)$ ensures that $k_1/(\pi^2\alpha) \ge 4k_1$, and thus $m \ge 3k_1$ and $m \ge k_1/(2\pi^2\alpha)$.
Moreover, our choice of $m$ ensures that
\[
\alpha \le \frac{k_1}{\pi^2m}.
\]
By \Cref{lm:lower-helper}, for $\X = \{-1,1\}^m$ and $\Y = \{0,1\}$ there exists a distribution $\mD$ over $\X\times \Y$ such that for every nonnegative integer $n = 0,\ldots, m - k_1$, any model $f_n\in \junta_n$ is \emph{not} $(\mC,\sqrt{\alpha})$-MA with respect to $\mC = \junta_{k_1}^*$. Since $\junta_{k_1}^*\subseteq \junta_k^*$, any $f_n\in \junta_n$ is also \emph{not} $(\mC,\sqrt{\alpha})$-MA with respect to $\mC = \junta_{k}^*$. The number of such integers $n$ is
\[
m - k_1 + 1 \ge 2m/3 \ge (2/3)(k_1/(2\pi^2\alpha)) = k_1/(3\pi^2\alpha) \ge k/(6\pi^2\alpha).\qedhere
\]
\end{proof}

\section{Discussion}\label{sec:discussion}

We start by discussing some limitations of our work. We do not provide any guidance on selecting $k$, which is an important theoretical and practical issue. Second, there is a blowup in the NN size $n$ required to achieve MC with respect to NNs of size $k$. Also, the model we study is highly unrealistic in that we assume that we have unbounded training data and computational resources at training time to achieve near-minimum expected loss over neural networks of a given size. Although current pre-training efforts do involve massive amounts of data and compute, it is not clear that they are or will ever approach what could be done in this limit.

Nonetheless, we feel that it still may be useful to understand the nature of different predictor representations and the tensions that they face. First of all, one conclusion of this work is that one can achieve MC with NNs--as a representation they are not limited in the same way as some other models. In particular, simpler models (which may enjoy other benefits such as interpretability) may face an inherent tension between accuracy and calibration overall and among different groups. 

If one leads a group of practitioners training NNs and one is concerned about multicalibration, our work suggests that incentivizing them to simply minimize loss may, to some extent, be aligned with MC. This is formal in the sense of Lemma \ref{lem:standard}: if one can identify a group for which the current predictor is miscalibrated, then one can further reduce the loss, escaping a local minimum if the predictor was trained with a local optimization algorithm. But this may naturally happen if they were incentivized to minimize loss. If it is not happening naturally, such a ``check'' could be suggested and it would be entirely compatible with the group's loss incentives.

\paragraph{Future work.}
It would be interesting to empirically measure whether NNs used in practice are multicalibrated. If so, this might be viewed as one additional benefit of their use over simpler models. If not, it would be interesting to understand the reasons-- the gaps between theory and practice.

Another direction would be to prove a similar result for other classes of hypotheses.
The key property of neural nets that we use is that the post-processing required to update the predictor can be incorporated into a predictor of larger size. It is not hard to see that other classes such as decision trees and branching programs (parametrized by size) do have this property. But they are not known to have provable algorithms for squared loss minimization. In contrast linear models do admit efficient loss minimization. But they are not powerful enough to be closed under the kinds of post-processing that multicalibration requires. Moreover, they cannot express the kind of predicates that are interesting for multicalibration, such as $x \in S$ and $f(x) = 0.1$.

An intriguing question is whether there is a hypothesis class that is indeed powerful enough to give interesting multicalibration guarantees and is closed under post-processing, but which admits efficient loss minimization. It is also possible that there exists a no-free-lunch flavor of result which rules out such a possibility. We leave formalizing and proving such results to future work.

\paragraph{Risks.} We conclude on a cautionary note. There is a risk that this and other similar work will be misinterpreted as ``fairness/multicalibration comes for free if you just optimize loss in DNNs'' but that is not a valid conclusion for multiple reasons. First of all, multicalibration is only one notion of fairness. It is particularly applicable in settings where \eat{ in which there is no competition for resources,} and the groups of interest are identifiable from $x$.\eat{ If there is competition for resources, e.g., the highest ranked candidates will be offered loans, then increased accuracy on one group may come at the expense of another. }
Multicalibration cannot protect groups of concern that are not identifiable from $x$, e.g., race is a complex feature that may not be explicitly coded or perfectly computable from existing features, and the protection gets weaker for smaller groups. 

If a practitioner was given an \textit{explicit} set of protected groups whose accuracy is to be prioritized, they ought to consider all  steps in the machine learning pipeline and not just optimization: they might adjust their architectures, collect additional data, or expand the feature set so as to increase the family of protected subgroups and performance on those groups. \eat{Second, if one does want to achieve MC, it is possible that the explicit consideration of MC rather than focusing on loss will lead to different conclusions, especially since the machine learning pipeline involves much more than simply optimizing a classifier.} For instance, explicitly investigating multicalibration may lead to discovering certain groups on which it is hard to achieve good calibration, which requires a different type of learning architecture, better datasets or more informative features.

\section*{Acknowledgments}
We thank Ryan O'Donnell and Rocco Servedio for suggesting the current proof of \Cref{lm:maj-cor}.
We also thank Barry-John Theobald for comments on an early draft.
Part of this work was performed while LH was interning at Apple. LH is also supported by Moses Charikar’s and Omer Reingold’s Simons Investigators awards, Omer Reingold’s NSF Award IIS-1908774, and the Simons Foundation Collaboration on the Theory of Algorithmic Fairness.

JB is supported by a Junior Fellowship from the Simons Society of Fellows.

\bibliographystyle{plainnat}
\bibliography{itcs-refs}

\begin{thebibliography}{48}
\providecommand{\natexlab}[1]{#1}
\providecommand{\url}[1]{\texttt{#1}}
\expandafter\ifx\csname urlstyle\endcsname\relax
  \providecommand{\doi}[1]{doi: #1}\else
  \providecommand{\doi}{doi: \begingroup \urlstyle{rm}\Url}\fi

\bibitem[Barocas et~al.(2019)Barocas, Hardt, and
  Narayanan]{barocas-hardt-narayanan}
Solon Barocas, Moritz Hardt, and Arvind Narayanan.
\newblock \emph{Fairness and Machine Learning: Limitations and Opportunities}.
\newblock fairmlbook.org, 2019.
\newblock \url{http://www.fairmlbook.org}.

\bibitem[Biderman et~al.(2023)Biderman, Schoelkopf, Anthony, Bradley, O'Brien,
  Hallahan, Khan, Purohit, Prashanth, Raff, et~al.]{biderman2023pythia}
Stella Biderman, Hailey Schoelkopf, Quentin Anthony, Herbie Bradley, Kyle
  O'Brien, Eric Hallahan, Mohammad~Aflah Khan, Shivanshu Purohit, USVSN~Sai
  Prashanth, Edward Raff, et~al.
\newblock Pythia: A suite for analyzing large language models across training
  and scaling.
\newblock \emph{arXiv preprint arXiv:2304.01373}, 2023.

\bibitem[B{\l}asiok et~al.(2023{\natexlab{a}})B{\l}asiok, Gopalan, Hu, and
  Nakkiran]{calibgap}
Jaros{\l}aw B{\l}asiok, Parikshit Gopalan, Lunjia Hu, and Preetum Nakkiran.
\newblock When does optimizing a proper loss yield calibration?
\newblock \emph{arXiv preprint arXiv:2305.18764}, 2023{\natexlab{a}}.

\bibitem[B{\l}asiok et~al.(2023{\natexlab{b}})B{\l}asiok, Gopalan, Hu, and
  Nakkiran]{utc}
Jaros{\l}aw B{\l}asiok, Parikshit Gopalan, Lunjia Hu, and Preetum Nakkiran.
\newblock A unifying theory of distance from calibration.
\newblock In \emph{Proceedings of the 55th Annual ACM Symposium on Theory of
  Computing}, STOC 2023, page 1727–1740, New York, NY, USA,
  2023{\natexlab{b}}. Association for Computing Machinery.
\newblock ISBN 9781450399135.
\newblock \doi{10.1145/3564246.3585182}.
\newblock URL \url{https://doi.org/10.1145/3564246.3585182}.

\bibitem[Blondel et~al.(2020)Blondel, Martins, and Niculae]{learning-fenchel}
Mathieu Blondel, André F.~T. Martins, and Vlad Niculae.
\newblock Learning with {F}enchel-{Y}oung losses.
\newblock \emph{Journal of Machine Learning Research}, 21\penalty0
  (35):\penalty0 1--69, 2020.
\newblock URL \url{http://jmlr.org/papers/v21/19-021.html}.

\bibitem[Brown et~al.(2020)Brown, Mann, Ryder, Subbiah, Kaplan, Dhariwal,
  Neelakantan, Shyam, Sastry, Askell, Agarwal, Herbert-Voss, Krueger, Henighan,
  Child, Ramesh, Ziegler, Wu, Winter, ..., and Amodei]{GPT3}
Tom Brown, Benjamin Mann, Nick Ryder, Melanie Subbiah, Jared~D Kaplan, Prafulla
  Dhariwal, Arvind Neelakantan, Pranav Shyam, Girish Sastry, Amanda Askell,
  Sandhini Agarwal, Ariel Herbert-Voss, Gretchen Krueger, Tom Henighan, Rewon
  Child, Aditya Ramesh, Daniel Ziegler, Jeffrey Wu, Clemens Winter, ..., and
  Dario Amodei.
\newblock Language models are few-shot learners.
\newblock In \emph{Advances in Neural Information Processing Systems},
  volume~33, pages 1877--1901. Curran Associates, Inc., 2020.
\newblock URL
  \url{https://proceedings.neurips.cc/paper/2020/file/1457c0d6bfcb4967418bfb8ac142f64a-Paper.pdf}.

\bibitem[Buja et~al.(2005)Buja, Stuetzle, and Shen]{loss-structure}
Andreas Buja, Werner Stuetzle, and Yi~Shen.
\newblock Loss functions for binary class probability estimation and
  classification: Structure and applications.
\newblock \emph{Manuscript}, 2005.
\newblock URL
  \url{https://sites.stat.washington.edu/wxs/Learning-papers/paper-proper-scoring.pdf}.

\bibitem[Carrell et~al.(2022)Carrell, Mallinar, Lucas, and
  Nakkiran]{carrell2022calibration}
Annabelle Carrell, Neil Mallinar, James Lucas, and Preetum Nakkiran.
\newblock The calibration generalization gap.
\newblock \emph{arXiv preprint arXiv:2210.01964}, 2022.

\bibitem[Deng et~al.(2023)Deng, Dwork, and Zhang]{happymap}
Zhun Deng, Cynthia Dwork, and Linjun Zhang.
\newblock Happymap : {A} generalized multicalibration method.
\newblock In Yael~Tauman Kalai, editor, \emph{14th Innovations in Theoretical
  Computer Science Conference, {ITCS} 2023, January 10-13, 2023, MIT,
  Cambridge, Massachusetts, {USA}}, volume 251 of \emph{LIPIcs}, pages
  41:1--41:23. Schloss Dagstuhl - Leibniz-Zentrum f{\"{u}}r Informatik, 2023.
\newblock \doi{10.4230/LIPIcs.ITCS.2023.41}.
\newblock URL \url{https://doi.org/10.4230/LIPIcs.ITCS.2023.41}.

\bibitem[Desai and Durrett(2020)]{desai2020calibration}
Shrey Desai and Greg Durrett.
\newblock Calibration of pre-trained transformers.
\newblock In \emph{Proceedings of the 2020 Conference on Empirical Methods in
  Natural Language Processing (EMNLP)}, pages 295--302, 2020.

\bibitem[Duchi et~al.(2018)Duchi, Khosravi, and Ruan]{multiclass-divergence}
John Duchi, Khashayar Khosravi, and Feng Ruan.
\newblock {Multiclass classification, information, divergence and surrogate
  risk}.
\newblock \emph{The Annals of Statistics}, 46\penalty0 (6B):\penalty0 3246 --
  3275, 2018.
\newblock \doi{10.1214/17-AOS1657}.
\newblock URL \url{https://doi.org/10.1214/17-AOS1657}.

\bibitem[Dwork et~al.(2021)Dwork, Kim, Reingold, Rothblum, and Yona]{OI}
Cynthia Dwork, Michael~P. Kim, Omer Reingold, Guy~N. Rothblum, and Gal Yona.
\newblock Outcome indistinguishability.
\newblock In \emph{ACM Symposium on Theory of Computing (STOC'21)}, 2021.
\newblock URL \url{https://arxiv.org/abs/2011.13426}.

\bibitem[Dwork et~al.(2022)Dwork, Kim, Reingold, Rothblum, and
  Yona]{dwork2022beyond}
Cynthia Dwork, Michael~P. Kim, Omer Reingold, Guy~N. Rothblum, and Gal Yona.
\newblock Beyond bernoulli: Generating random outcomes that cannot be
  distinguished from nature.
\newblock In \emph{The 33rd International Conference on Algorithmic Learning
  Theory}, 2022.

\bibitem[Fischer et~al.(2004)Fischer, Kindler, Ron, Safra, and
  Samorodnitsky]{FISCHER2004753}
Eldar Fischer, Guy Kindler, Dana Ron, Shmuel Safra, and Alex Samorodnitsky.
\newblock Testing juntas.
\newblock \emph{Journal of Computer and System Sciences}, 68\penalty0
  (4):\penalty0 753--787, 2004.
\newblock URL
  \url{https://www.sciencedirect.com/science/article/pii/S0022000003001831}.
\newblock Special Issue on FOCS 2002.

\bibitem[Garg et~al.(2023)Garg, Jung, Reingold, and Roth]{GJRR23}
Sumegha Garg, Christopher Jung, Omer Reingold, and Aaron Roth.
\newblock Oracle efficient online multicalibration and omniprediction.
\newblock \emph{CoRR}, abs/2307.08999, 2023.
\newblock URL \url{https://doi.org/10.48550/arXiv.2307.08999}.

\bibitem[Globus-Harris et~al.(2023)Globus-Harris, Harrison, Kearns, Roth, and
  Sorrell]{mc-boosting}
Ira Globus-Harris, Declan Harrison, Michael Kearns, Aaron Roth, and Jessica
  Sorrell.
\newblock Multicalibration as boosting for regression.
\newblock In Andreas Krause, Emma Brunskill, Kyunghyun Cho, Barbara Engelhardt,
  Sivan Sabato, and Jonathan Scarlett, editors, \emph{Proceedings of the 40th
  International Conference on Machine Learning}, volume 202 of
  \emph{Proceedings of Machine Learning Research}, pages 11459--11492. PMLR,
  23--29 Jul 2023.
\newblock URL \url{https://proceedings.mlr.press/v202/globus-harris23a.html}.

\bibitem[Gopalan et~al.(2008)Gopalan, Kalai, and Klivans]{GKK08}
Parikshit Gopalan, Adam~Tauman Kalai, and Adam~R. Klivans.
\newblock Agnostically learning decision trees.
\newblock In \emph{Proceedings of the Fortieth Annual ACM Symposium on Theory
  of Computing}, STOC '08, page 527–536, 2008.

\bibitem[Gopalan et~al.(2022{\natexlab{a}})Gopalan, Kalai, Reingold, Sharan,
  and Wieder]{omni}
Parikshit Gopalan, Adam~Tauman Kalai, Omer Reingold, Vatsal Sharan, and Udi
  Wieder.
\newblock Omnipredictors.
\newblock In \emph{Innovations in Theoretical Computer Science (ITCS'2022)},
  2022{\natexlab{a}}.
\newblock URL \url{https://arxiv.org/abs/2109.05389}.

\bibitem[Gopalan et~al.(2022{\natexlab{b}})Gopalan, Kim, Singhal, and
  Zhao]{GopalanKSZ22}
Parikshit Gopalan, Michael~P. Kim, Mihir Singhal, and Shengjia Zhao.
\newblock Low-degree multicalibration.
\newblock In \emph{Conference on Learning Theory, 2-5 July 2022, London, {UK}},
  volume 178 of \emph{Proceedings of Machine Learning Research}, pages
  3193--3234. {PMLR}, 2022{\natexlab{b}}.

\bibitem[Gopalan et~al.(2022{\natexlab{c}})Gopalan, Reingold, Sharan, and
  Wieder]{gopalan2021multicalibrated}
Parikshit Gopalan, Omer Reingold, Vatsal Sharan, and Udi Wieder.
\newblock Multicalibrated partitions for importance weights.
\newblock In \emph{International Conference on Algorithmic Learning Theory,
  29-1 April 2022, Paris, France}, volume 167 of \emph{Proceedings of Machine
  Learning Research}, pages 408--435. {PMLR}, 2022{\natexlab{c}}.

\bibitem[Gopalan et~al.(2023{\natexlab{a}})Gopalan, Hu, Kim, Reingold, and
  Wieder]{op2}
Parikshit Gopalan, Lunjia Hu, Michael~P. Kim, Omer Reingold, and Udi Wieder.
\newblock {Loss Minimization Through the Lens Of Outcome Indistinguishability}.
\newblock In Yael Tauman~Kalai, editor, \emph{14th Innovations in Theoretical
  Computer Science Conference (ITCS 2023)}, volume 251 of \emph{Leibniz
  International Proceedings in Informatics (LIPIcs)}, pages 60:1--60:20,
  Dagstuhl, Germany, 2023{\natexlab{a}}. Schloss Dagstuhl -- Leibniz-Zentrum
  f{\"u}r Informatik.
\newblock ISBN 978-3-95977-263-1.
\newblock \doi{10.4230/LIPIcs.ITCS.2023.60}.
\newblock URL \url{https://drops.dagstuhl.de/opus/volltexte/2023/17563}.

\bibitem[Gopalan et~al.(2023{\natexlab{b}})Gopalan, Kim, and Reingold]{op3}
Parikshit Gopalan, Michael Kim, and Omer Reingold.
\newblock Characterizing notions of omniprediction via multicalibration.
\newblock In \emph{under submission}, 2023{\natexlab{b}}.
\newblock URL \url{https://arxiv.org/abs/2302.06726}.

\bibitem[H{\'{e}}bert{-}Johnson et~al.(2018)H{\'{e}}bert{-}Johnson, Kim,
  Reingold, and Rothblum]{hkrr2018}
{\'{U}}rsula H{\'{e}}bert{-}Johnson, Michael~P. Kim, Omer Reingold, and Guy~N.
  Rothblum.
\newblock Multicalibration: Calibration for the (computationally-identifiable)
  masses.
\newblock In \emph{Proceedings of the 35th International Conference on Machine
  Learning, {ICML}}, 2018.

\bibitem[Hendrycks* et~al.(2020)Hendrycks*, Mu*, Cubuk, Zoph, Gilmer, and
  Lakshminarayanan]{hendrycksaugmix}
Dan Hendrycks*, Norman Mu*, Ekin~Dogus Cubuk, Barret Zoph, Justin Gilmer, and
  Balaji Lakshminarayanan.
\newblock Augmix: A simple method to improve robustness and uncertainty under
  data shift.
\newblock In \emph{International Conference on Learning Representations}, 2020.
\newblock URL \url{https://openreview.net/forum?id=S1gmrxHFvB}.

\bibitem[Hornik et~al.(1989{\natexlab{a}})Hornik, Stinchcombe, and
  White]{hornik1989multilayer}
Kurt Hornik, Maxwell Stinchcombe, and Halbert White.
\newblock Multilayer feedforward networks are universal approximators.
\newblock \emph{Neural networks}, 2\penalty0 (5):\penalty0 359--366,
  1989{\natexlab{a}}.

\bibitem[Hornik et~al.(1989{\natexlab{b}})Hornik, Stinchcombe, and
  White]{universal-approx}
Kurt Hornik, Maxwell Stinchcombe, and Halbert White.
\newblock Multilayer feedforward networks are universal approximators.
\newblock \emph{Neural Networks}, 2\penalty0 (5):\penalty0 359--366,
  1989{\natexlab{b}}.
\newblock ISSN 0893-6080.
\newblock \doi{https://doi.org/10.1016/0893-6080(89)90020-8}.
\newblock URL
  \url{https://www.sciencedirect.com/science/article/pii/0893608089900208}.

\bibitem[Hu et~al.(2023)Hu, Livni~Navon, Reingold, and
  Yang]{hu2022omnipredictors}
Lunjia Hu, Inbal~Rachel Livni~Navon, Omer Reingold, and Chutong Yang.
\newblock Omnipredictors for constrained optimization.
\newblock In Andreas Krause, Emma Brunskill, Kyunghyun Cho, Barbara Engelhardt,
  Sivan Sabato, and Jonathan Scarlett, editors, \emph{Proceedings of the 40th
  International Conference on Machine Learning}, volume 202 of
  \emph{Proceedings of Machine Learning Research}, pages 13497--13527. PMLR,
  23--29 Jul 2023.
\newblock URL \url{https://proceedings.mlr.press/v202/hu23b.html}.

\bibitem[Jung et~al.(2020)Jung, Lee, Pai, Roth, and Vohra]{jung2020moment}
Christopher Jung, Changhwa Lee, Mallesh~M Pai, Aaron Roth, and Rakesh Vohra.
\newblock Moment multicalibration for uncertainty estimation.
\newblock \emph{arXiv preprint arXiv:2008.08037}, 2020.

\bibitem[Kakade and Foster(2004)]{kakade-foster}
Sham~M. Kakade and Dean~P. Foster.
\newblock Deterministic calibration and {N}ash equilibrium.
\newblock In John Shawe-Taylor and Yoram Singer, editors, \emph{Learning
  Theory}, pages 33--48, Berlin, Heidelberg, 2004. Springer Berlin Heidelberg.
\newblock ISBN 978-3-540-27819-1.

\bibitem[Karandikar et~al.(2021)Karandikar, Cain, Tran, Lakshminarayanan,
  Shlens, Mozer, and Roelofs]{karandikar2021soft}
Archit Karandikar, Nicholas Cain, Dustin Tran, Balaji Lakshminarayanan,
  Jonathon Shlens, Michael~C Mozer, and Becca Roelofs.
\newblock Soft calibration objectives for neural networks.
\newblock \emph{Advances in Neural Information Processing Systems},
  34:\penalty0 29768--29779, 2021.

\bibitem[Kearns et~al.(2017)Kearns, Neel, Roth, and Wu]{kearns2017preventing}
Michael Kearns, Seth Neel, Aaron Roth, and Zhiwei~Steven Wu.
\newblock Preventing fairness gerrymandering: Auditing and learning for
  subgroup fairness.
\newblock \emph{arXiv preprint arXiv:1711.05144}, 2017.

\bibitem[Kim and Perdomo(2023)]{performative-omni}
Michael~P. Kim and Juan~C. Perdomo.
\newblock {Making Decisions Under Outcome Performativity}.
\newblock In Yael Tauman~Kalai, editor, \emph{14th Innovations in Theoretical
  Computer Science Conference (ITCS 2023)}, volume 251 of \emph{Leibniz
  International Proceedings in Informatics (LIPIcs)}, pages 79:1--79:15,
  Dagstuhl, Germany, 2023. Schloss Dagstuhl -- Leibniz-Zentrum f{\"u}r
  Informatik.
\newblock ISBN 978-3-95977-263-1.
\newblock \doi{10.4230/LIPIcs.ITCS.2023.79}.
\newblock URL \url{https://drops.dagstuhl.de/opus/volltexte/2023/17582}.

\bibitem[Kim et~al.(2019)Kim, Ghorbani, and Zou]{kim2019multiaccuracy}
Michael~P Kim, Amirata Ghorbani, and James Zou.
\newblock Multiaccuracy: Black-box post-processing for fairness in
  classification.
\newblock In \emph{Proceedings of the 2019 AAAI/ACM Conference on AI, Ethics,
  and Society}, pages 247--254, 2019.

\bibitem[Kim et~al.(2022)Kim, Kern, Goldwasser, Kreuter, and
  Reingold]{kim2022universal}
Michael~P Kim, Christoph Kern, Shafi Goldwasser, Frauke Kreuter, and Omer
  Reingold.
\newblock Universal adaptability: Target-independent inference that competes
  with propensity scoring.
\newblock \emph{Proceedings of the National Academy of Sciences}, 119\penalty0
  (4), 2022.

\bibitem[Kleinberg et~al.(2017)Kleinberg, Mullainathan, and
  Raghavan]{KleinbergMR17}
Jon~M. Kleinberg, Sendhil Mullainathan, and Manish Raghavan.
\newblock Inherent trade-offs in the fair determination of risk scores.
\newblock In \emph{8th Innovations in Theoretical Computer Science Conference,
  {ITCS}}, 2017.

\bibitem[Liu et~al.(2019)Liu, Simchowitz, and Hardt]{liu2019implicit}
Lydia~T Liu, Max Simchowitz, and Moritz Hardt.
\newblock The implicit fairness criterion of unconstrained learning.
\newblock In \emph{International Conference on Machine Learning}, pages
  4051--4060. PMLR, 2019.

\bibitem[Minderer et~al.(2021)Minderer, Djolonga, Romijnders, Hubis, Zhai,
  Houlsby, Tran, and Lucic]{minderer2021revisiting}
Matthias Minderer, Josip Djolonga, Rob Romijnders, Frances Hubis, Xiaohua Zhai,
  Neil Houlsby, Dustin Tran, and Mario Lucic.
\newblock Revisiting the calibration of modern neural networks.
\newblock \emph{Advances in Neural Information Processing Systems},
  34:\penalty0 15682--15694, 2021.

\bibitem[Mossel et~al.(2004)Mossel, O'Donnell, and Servedio]{MOSSEL2004421}
Elchanan Mossel, Ryan O'Donnell, and Rocco~A. Servedio.
\newblock Learning functions of k relevant variables.
\newblock \emph{Journal of Computer and System Sciences}, 69\penalty0
  (3):\penalty0 421--434, 2004.
\newblock ISSN 0022-0000.
\newblock \doi{https://doi.org/10.1016/j.jcss.2004.04.002}.
\newblock URL
  \url{https://www.sciencedirect.com/science/article/pii/S0022000004000418}.
\newblock Special Issue on STOC 2003.

\bibitem[Nagarajan and Kolter(2019)]{nagarajan2019uniform}
Vaishnavh Nagarajan and J~Zico Kolter.
\newblock Uniform convergence may be unable to explain generalization in deep
  learning.
\newblock \emph{Advances in Neural Information Processing Systems}, 32, 2019.

\bibitem[Nakkiran et~al.(2021)Nakkiran, Neyshabur, and Sedghi]{nakkirandeep}
Preetum Nakkiran, Behnam Neyshabur, and Hanie Sedghi.
\newblock The deep bootstrap framework: Good online learners are good offline
  generalizers.
\newblock In \emph{International Conference on Learning Representations}, 2021.
\newblock URL \url{https://openreview.net/forum?id=guetrIHLFGI}.

\bibitem[O'Donnell(2014)]{o2014analysis}
Ryan O'Donnell.
\newblock \emph{Analysis of boolean functions}.
\newblock Cambridge University Press, 2014.

\bibitem[Reid and Williamson(2010)]{composite}
Mark~D. Reid and Robert~C. Williamson.
\newblock Composite binary losses.
\newblock \emph{Journal of Machine Learning Research}, 11\penalty0
  (83):\penalty0 2387--2422, 2010.
\newblock URL \url{http://jmlr.org/papers/v11/reid10a.html}.

\bibitem[Savage(1971)]{elicitation}
Leonard~J. Savage.
\newblock Elicitation of personal probabilities and expectations.
\newblock \emph{Journal of the American Statistical Association}, 66\penalty0
  (336):\penalty0 783--801, 1971.
\newblock \doi{10.1080/01621459.1971.10482346}.
\newblock URL
  \url{https://www.tandfonline.com/doi/abs/10.1080/01621459.1971.10482346}.

\bibitem[Schervish(1989)]{assessor}
Mark~J. Schervish.
\newblock A general method for comparing probability assessors.
\newblock \emph{The Annals of Statistics}, 17\penalty0 (4):\penalty0 1856 --
  1879, 1989.
\newblock \doi{10.1214/aos/1176347398}.
\newblock URL \url{https://doi.org/10.1214/aos/1176347398}.

\bibitem[Shuford et~al.(1966)Shuford, Albert, and
  Edward~Massengill]{admissible}
Emir~H. Shuford, Arthur Albert, and H.~Edward~Massengill.
\newblock Admissible probability measurement procedures.
\newblock \emph{Psychometrika}, 31\penalty0 (2):\penalty0 125--145, 1966.
\newblock \doi{10.1007/BF02289503}.
\newblock URL \url{https://doi.org/10.1007/BF02289503}.

\bibitem[Spielman and Teng(2004)]{smoothed-analysis}
Daniel~A. Spielman and Shang-Hua Teng.
\newblock Smoothed analysis of algorithms: Why the simplex algorithm usually
  takes polynomial time.
\newblock \emph{J. ACM}, 51\penalty0 (3):\penalty0 385–463, may 2004.
\newblock ISSN 0004-5411.
\newblock \doi{10.1145/990308.990310}.
\newblock URL \url{https://doi.org/10.1145/990308.990310}.

\bibitem[Titsworth(1962)]{titsworth1962correlation}
Robert~C Titsworth.
\newblock \emph{Correlation properties of cyclic sequences}.
\newblock PhD thesis, California Institute of Technology, 1962.

\bibitem[Zhang et~al.(2021)Zhang, Bengio, Hardt, Recht, and
  Vinyals]{zhang2021understanding}
Chiyuan Zhang, Samy Bengio, Moritz Hardt, Benjamin Recht, and Oriol Vinyals.
\newblock Understanding deep learning (still) requires rethinking
  generalization.
\newblock \emph{Communications of the ACM}, 64\penalty0 (3):\penalty0 107--115,
  2021.

\end{thebibliography}

\appendix
\section{Correlation between Majority Functions}
\label{sec:maj}
We restate and prove \Cref{lm:maj-cor}.
\majcor*
\begin{proof}
For any $S\subseteq\{1,\ldots,m\}$, we define
\begin{align*}
\widehat\maj_k(S) := \E[\maj(x_1,\ldots,x_k)\prod_{i\in S}x_i].
\end{align*}
By the Plancherel theorem,
\[
\E[\maj(x_1,\ldots,x_k)\maj(x_1,\ldots,x_m)] = \sum_{S\subseteq\{1,\ldots,m\}}\widehat\maj_k(S)\widehat\maj_m(S).
\]
It is easy to verify that if $S$ is not a subset of $\{1,\ldots,k\}$, then $\widehat\maj_k(S) = 0$. Also, by calculating the Fourier coefficients of the majority function \citep{titsworth1962correlation} \citep[see][Section 5.3]{o2014analysis}, we have $\widehat\maj_k(S)\widehat\maj_m(S) \ge 0$ for any $S\subseteq\{1,\ldots,k\}$. Therefore,
\begin{align*}
\E[\maj(x_1,\ldots,x_k)\maj(x_1,\ldots,x_m)] & \ge \sum_{i=1}^k\widehat\maj_k(\{i\})\widehat\maj_m(\{i\})\\
& = k\left(\genfrac(){0pt}1{k-1}{(k-1)/2}/2^{k-1}\right)\left(\genfrac(){0pt}1{m-1}{(m-1)/2}/2^{m-1}\right)\\
& > k \sqrt{\frac{2}{\pi k}}\sqrt{\frac{2}{\pi m}}\\
& = \frac 2\pi \sqrt{\frac km}.
\end{align*}
Here we use the fact that for any odd positive integer $k$,
\[
\genfrac(){0pt}1{k-1}{(k-1)/2}/2^{k-1} = \frac{1\times 3 \times \cdots \times (k-2)}{2\times 4\times \cdots \times (k - 1)} = \sqrt{\frac 1k \cdot \left(\frac 12 \cdot \frac 32 \cdot \frac 34 \cdot \frac 54 \cdot \cdots \cdot \frac{k-2}{k-1}\cdot \frac k{k-1}\right)} > \sqrt{\frac 1k \cdot \frac 2\pi},
\]
where the last inequality follows from the Wallis product formula for $\pi$.
\end{proof}

\section{Generalization to Any Proper Loss}
\label{sec:proper}
In our main results (Theorems~\ref{thm:informal-main}, \ref{thm:sequence}, \ref{thm:upper-bound}, \ref{thm:sequence-reg}, \ref{thm:reg}),
we show that multicalibration can be achieved by loss minimization over many predictor families. 
While we present these results using the squared loss, in this section we show that they can be generalized to any \emph{proper} loss function satisfying basic regularity conditions, including the cross-entropy loss used widely in neural network training. 
The main technical tool we use to establish this generalization is \Cref{thm:proper-reduction} below from the work of \cite{calibgap}. \Cref{thm:proper-reduction} generalizes the loss reduction lemma (\Cref{lem:standard}) to general proper loss functions.

We first formally define proper loss functions. A loss function $\ell$ takes as input a binary outcome $y\in \{0,1\}$ and a prediction value $v\in [0,1]$, and outputs a real number $\ell(y,v)\in \R$. A proper loss function is defined as follows:
\begin{definition}[Proper loss]
Let $V\subseteq[0,1]$ be a non-empty interval.
We say a loss function $\ell:\{0,1\}\times V \to \R$ is \emph{proper} if for every $v\in V$, it holds that $v\in \argmin_{v'\in V}\E_{y\sim\ber(v)}[\ell(y,v')]$.
\end{definition}
One can easily verify that the squared loss $\sq(y,v) = (y - v)^2$ is a proper loss function over $V = [0,1]$, and the cross-entropy loss $\xent(y,v) = -y\ln v - (1- y)\ln(1 - v)$ is a proper loss function over $V = (0,1)$.

Given input $x\in \X$, a neural network trained to minimize the expected cross-entropy loss usually first computes a \emph{logit} $t\in \R$, from which the final prediction $v\in (0,1)$ is obtained through the sigmoid transformation $\sigma$ by $v = \sigma(t) := 1/(1 + e^{-t})$. Given this relationship between $v$ and $t$, the cross-entropy loss on $v$ is the same as the \emph{logistic loss} $\logistic$ on $t$:
\begin{equation}
\label{eq:xent-logistic}
\xent(y,v) = \logistic(y,t) := \ln(1 + e^t) - yt.
\end{equation}
Therefore, instead of considering the cross-entropy loss of the final prediction $v$, we can equivalently consider the logistic loss of the logit $t$. This allows us to avoid complication caused by the sigmoid transformation and focus just on the ReLU network producing the logit. 

The following lemma generalizes this correspondence between the cross-entropy loss and the logistic loss to any proper loss:

\begin{lemma}[{\citep[see][]{calibgap}}]
\label{lm:dual-prediction}
Let $V\subseteq[0,1]$ be a non-empty interval.
Let $\ell:\{0,1\}\times V\to \R$ be a proper loss function. For every $v\in V$, define $\dual(v):= \ell(0,v) - \ell(1,v)$. Then there exists a convex function $\psi:\R\to \R$ such that
\begin{equation}
\label{eq:dual-prediction}
\ell(y,v) = \psi(\dual(v)) - y\, \dual(v) \quad \text{for every $y\in \{0,1\}$ and $v\in V$.}
\end{equation}
We can additionally ensure that $\frac{\psi(t_1) - \psi(t_2)}{t_1 - t_2}\in [0,1]$ for any distinct $t_1,t_2\in \R$. If $\psi$ is differentiable, then $\nabla\psi(t)\in [0,1]$ for every $t\in \R$, and $\nabla\psi(\dual(v)) = v$ for every $v\in V$, where $\nabla \psi$ denotes the derivative of $\psi$.
\end{lemma}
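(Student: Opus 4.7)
My plan is to construct $\psi$ explicitly as a Legendre--Fenchel-type envelope and then verify the stated properties one by one. A useful first observation that simplifies the task is that, since $y\in\{0,1\}$, we have identically
\[
\ell(y,v) = \ell(0,v) - y\bigl(\ell(0,v) - \ell(1,v)\bigr) = \ell(0,v) - y\,\dual(v).
\]
So proving \eqref{eq:dual-prediction} reduces to constructing a convex $\psi:\R \to \R$ with slopes in $[0,1]$ that satisfies $\psi(\dual(v)) = \ell(0,v)$ for every $v \in V$.

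The construction I would use is
\[
\psi(t) := \sup_{v \in V}\bigl\{\ell(0,v) + v\,(t - \dual(v))\bigr\},
\]
a pointwise supremum of affine functions of $t$ with slopes $v \in V \subseteq [0,1]$. This makes $\psi$ automatically convex. For the slope bound, given $t_1 < t_2$, each affine piece $f_v(t) = \ell(0,v) + v(t - \dual(v))$ satisfies $f_v(t_1) \le f_v(t_2) \le f_v(t_1) + (t_2 - t_1)$ since $v \in [0,1]$; taking the supremum over $v$ on each side yields $\psi(t_1) \le \psi(t_2) \le \psi(t_1) + (t_2 - t_1)$, which is exactly the desired bound on $(\psi(t_2) - \psi(t_1))/(t_2 - t_1)$.

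The main step is verifying the identity $\psi(\dual(v_0)) = \ell(0,v_0)$ for every $v_0 \in V$. The lower bound is immediate by plugging $v = v_0$ into the sup. For the upper bound, properness of $\ell$ at $v$ says that $v' \mapsto \E_{y \sim \ber(v)}[\ell(y,v')] = \ell(0,v') - v\,\dual(v')$ is minimized on $V$ at $v' = v$; evaluating this inequality at $v' = v_0$ and rearranging yields $\ell(0,v) + v\bigl(\dual(v_0) - \dual(v)\bigr) \le \ell(0,v_0)$, so every term in the supremum defining $\psi(\dual(v_0))$ is bounded by $\ell(0,v_0)$. As a byproduct, $v_0$ is a supporting slope of $\psi$ at $\dual(v_0)$, so whenever $\psi$ happens to be differentiable there we necessarily have $\nabla\psi(\dual(v_0)) = v_0$, and the slope bound from the previous paragraph gives $\nabla\psi(t) \in [0,1]$ at every differentiability point.

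I do not anticipate a real obstacle here: the entire argument is essentially Legendre--Fenchel duality specialized to proper losses. The only technical subtlety worth flagging is that $\dual$ need not be injective, so naively defining $\psi(\dual(v)) = \ell(0,v)$ pointwise is ambiguous. The sup construction sidesteps this automatically; alternatively, one can first use the standard two-inequality swap from properness to show that $\dual$ is monotone non-decreasing on $V$ and that $\ell(0,\cdot)$ is constant on level sets of $\dual$, making the pointwise definition well-posed before extending by a convex envelope.
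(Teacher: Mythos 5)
The paper does not actually prove this lemma; it defers to \citep{calibgap} and remarks only that the argument goes through a connection between proper losses and conjugate pairs of convex functions. Your proof is correct and is exactly this kind of argument: you build $\psi$ as the upper envelope of the affine supporting functions $t \mapsto \ell(0,v) + v\bigl(t - \dual(v)\bigr)$, which is a convex conjugate in disguise, and you extract every claimed property from that single construction. The reduction $\ell(y,v) = \ell(0,v) - y\,\dual(v)$ is right, the slope bound $\psi(t_1) \le \psi(t_2) \le \psi(t_1) + (t_2 - t_1)$ follows from $v \in [0,1]$, the identity $\psi(\dual(v_0)) = \ell(0,v_0)$ is established by the two-sided properness inequality exactly as you state, and the subgradient/derivative statements are immediate consequences. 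The one small point worth making explicit in a writeup is finiteness of the supremum: once you know $\psi(\dual(v_0)) = \ell(0,v_0) < \infty$ for one $v_0 \in V$ (and $V$ is nonempty), the slope bound together with the lower bound $\psi(t) \ge f_{v_0}(t) > -\infty$ gives $\psi:\R\to\R$ everywhere; you use this implicitly, but the order of presentation (slope bound before the evaluation identity) leaves it unstated. Your closing remark about $\dual$ possibly failing to be injective and the sup construction sidestepping that is a good observation.
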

A proof of \Cref{lm:dual-prediction} can be found in the work of \citet{calibgap}. It is based on a connection between proper loss functions and conjugate pairs of convex functions studied by \citet{admissible,elicitation,assessor,loss-structure}.

In \Cref{lm:dual-prediction}, we say $\dual(v)$ is the \emph{dual prediction} corresponding to a prediction value $v\in V$. When $\ell$ is the cross-entropy loss $\xent$, the dual prediction $\dual(v)$ is exactly the logit $t$ corresponding to the prediction $v$.
For any proper loss $\ell$, \Cref{lm:dual-prediction} identifies a corresponding convex function $\psi:\R\to \R$. We define a \emph{dual loss function} $\ell\sps\psi:\{0,1\}\times \R\to \R$ such that 
\begin{equation}
\label{eq:dual-loss}
\ell\sps \psi(y,t) = \psi(t) - yt \quad \text{for every } y\in \{0,1\}\text{ and }t\in \R. 
\end{equation}
This definition of a dual loss function is essentially the definition of the Fenchel-Young loss in the literature \citep[see e.g.][]{multiclass-divergence,learning-fenchel}.
Equation \eqref{eq:dual-prediction} implies the following for any $v\in V$ and the corresponding $t = \dual(v)$:
\begin{equation}
\label{eq:dual-loss-relation}
\ell(y,v) = \ell\sps \psi(y,t).
\end{equation}
This generalizes \eqref{eq:xent-logistic} where $\xent$ is a special case of the proper loss $\ell$, and $\logistic$ is exactly the dual loss $\ell\sps \psi$ for the function $\psi$ obtained from \Cref{lm:dual-prediction}.
A loss function $\ell\sps \psi$ satisfying the relationship in \eqref{eq:dual-loss-relation} has been referred to as a \emph{composite loss} \citep[see e.g.][]{loss-structure,composite}.

Using \Cref{lm:dual-prediction} and \eqref{eq:dual-loss-relation}, to study a general proper loss $\ell$, it suffices to consider a convex function $\psi$ obtained from \Cref{lm:dual-prediction} and the dual loss $\ell\sps\psi$ defined in \eqref{eq:dual-loss}.
The dual loss $\ell\sps \psi$ depends on the dual prediction $t$, so instead of predictors $f:\X\to [0,1]$, it is more convenient to consider \emph{dual predictors} $g:\X\to \R$ that output dual predictions $g(x) \in \R$.
We consider the following definition of multicalibration for a dual predictor $g$. Note that by \Cref{lm:dual-prediction}, we can recover a prediction $v\in V$ from its dual prediction $t = \dual(v)$ by $v = \nabla \psi(t)$, assuming that $\psi$ is differentiable.

\begin{definition}[Multicalibration for dual predictors]
Let $\psi:\R\to \R$ be a differentiable function satisfying $\nabla \psi(t)\in [0,1]$ for every $t\in \R$. 
Let $\gamma > 0$ and $\mC$ be a class of auditor functions $c:\X\times \R \to [-1,1]$.
For a dual predictor $g:\X\to \R$, define $f:\X\to [0,1]$ by $f(x) = \nabla \psi(g(x))$.
We say $g$ is $(\mC, \gamma)$-multicalibrated or $(\mC, \gamma)$-MC if for all $c \in \mC$,
\begin{equation}
\Big|\E\nolimits_{(x,y)\sim \mD}\left[c\bigl(x, g(x)\bigr) \cdot \bigl(y-f(x)\bigr)\right]\Big| \le \gamma.
\end{equation}
\end{definition}
We can now state the theorem below which generalizes the loss reduction lemma (\Cref{lem:standard}) to general proper loss functions. For $\lambda \ge 0$, we say a differentiable function $\psi:\R\to \R$ is $\lambda$-smooth if $|\nabla\psi(t_1) - \nabla\psi(t_2)| \le \lambda |t_1 - t_2|$ for every $t_1,t_2\in \R$. For the cross-entropy loss, the corresponding function $\psi$ is given by $\psi(t) = \ln(1 + e^t)$ and it is $(1/4)$-smooth.
\begin{theorem}[Proper loss reduction from multicalibration violation \citep{calibgap}]
\label{thm:proper-reduction}
Let $\psi:\R\to \R$ be a differentiable function satisfying $\nabla \psi(t)\in [0,1]$ for every $t\in \R$. For $\lambda \ge 0$, assume that $\psi$ is $\lambda$-smooth. For a dual predictor $g:\X\to \R$, define $f:\X\to [0,1]$ by $f(x) = \nabla \psi(g(x))$. Consider an auditor function $c:\X\times \R\to [-1,1]$. For a distribution $\mD$ over $\X\times \{0,1\}$, define $\beta\in [-1,1]$ by
\[
\beta:=\E\nolimits_{(x,y)\sim \mD}\left[c\bigl(x, g(x)\bigr) \cdot \bigl(y-f(x)\bigr)\right].
\]
Define a new dual predictor $g':\X\to \R$ such that $g'(x) = g(x) + (\beta /\lambda) c(x,g(x))$. Then,
\[
\E\nolimits_{(x,y)\sim \mD}\ell\sps \psi(y,g'(x)) \le \E\nolimits_{(x,y)\sim \mD}\ell\sps \psi(y,g(x)) - \beta^2/(2\lambda).
\]
\end{theorem}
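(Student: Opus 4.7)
The plan is to mimic the proof of \Cref{lem:standard}, but replace the identity ``$(y-f(x))^2$ expansion plus clipping'' by the standard ``$\lambda$-smoothness plus gradient step'' argument on the dual loss $\ell^{(\psi)}(y,t)=\psi(t)-yt$. Since $\ell^{(\psi)}$ is linear in $y$ and the only nonlinearity is in $\psi$, the update $g'(x)=g(x)+(\beta/\lambda)\,c(x,g(x))$ looks exactly like a (scaled) gradient-descent step on the population loss $\E[\ell^{(\psi)}(y,g(x))]$, whose gradient with respect to the prediction $g(x)$ is $\nabla\psi(g(x))-y=f(x)-y$. So the expected descent direction is $\E[c(x,g(x))(y-f(x))]=\beta$, and the loss decrease should be the classical $\beta^2/(2\lambda)$ from one smooth gradient step with step-size $1/\lambda$.

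Concretely, I would proceed pointwise. Fix $(x,y)$ and let $\Delta(x)\defeq g'(x)-g(x)=(\beta/\lambda)\,c(x,g(x))$. By $\lambda$-smoothness of $\psi$,
\[
\psi(g'(x))\le \psi(g(x))+\nabla\psi(g(x))\,\Delta(x)+\tfrac{\lambda}{2}\Delta(x)^2.
\]
Subtracting $y\,g'(x)$ from the left and $y\,g(x)+y\,\Delta(x)$ from the right (these are equal) and using $\nabla\psi(g(x))=f(x)$ gives
\[
\ell^{(\psi)}(y,g'(x))-\ell^{(\psi)}(y,g(x))\le (f(x)-y)\,\Delta(x)+\tfrac{\lambda}{2}\Delta(x)^2.
\]
Plugging in $\Delta(x)=(\beta/\lambda)c(x,g(x))$ rewrites the right-hand side as
\[
-\tfrac{\beta}{\lambda}\,c(x,g(x))\,(y-f(x))+\tfrac{\beta^2}{2\lambda}\,c(x,g(x))^2.
\]

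Now I would take expectation over $(x,y)\sim\mD$. The first term becomes $-\beta^2/\lambda$ by the definition of $\beta$, and the second term is at most $\beta^2/(2\lambda)$ using $c(x,g(x))^2\le 1$. Adding these yields the desired bound $-\beta^2/(2\lambda)$. I expect no real obstacle here: the only subtlety is making sure the step-size $1/\lambda$ in the definition of $g'$ is the one that makes the linear-gain term $\beta^2/\lambda$ dominate the quadratic-cost term $\beta^2/(2\lambda)$, which is exactly why the statement uses the scaling $(\beta/\lambda)c(x,g(x))$ rather than $\beta\,c(x,g(x))$ as in \Cref{lem:standard} (where effectively $\lambda=1$ for the squared loss, and clipping is used in place of smoothness to control the quadratic term).
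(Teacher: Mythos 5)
Your proof is correct. The paper itself omits the proof of \Cref{thm:proper-reduction}, deferring to the cited work of \citet{calibgap}, so there is no in-paper argument to compare against; but your argument is exactly the natural one: the pointwise descent lemma for a $\lambda$-smooth $\psi$, applied to $\ell\sps\psi(y,t)=\psi(t)-yt$ (which is affine in $t$ apart from $\psi$), followed by expectation and the bounds $\E[c(x,g(x))(y-f(x))]=\beta$ and $c(x,g(x))^2\le 1$. This is the correct generalization of \Cref{lem:standard}, where for $\sq$ one has $\psi(t)=t^2/4$ on the dual scale (equivalently $\lambda=1$ after the change of variables used there) and clipping replaces smoothness. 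One minor observation: your argument tacitly needs $\lambda>0$ for the step $(\beta/\lambda)c(x,g(x))$ to be defined, which is also implicit in the theorem statement; and you do not need the hypothesis $\nabla\psi(t)\in[0,1]$ anywhere in the loss-reduction computation itself (it is only used so that $f(x)\in[0,1]$ is a legitimate prediction and $\beta\in[-1,1]$).
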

We omit the proof of \Cref{thm:proper-reduction} as it can be found in the work of \cite{calibgap}. We now use \Cref{thm:proper-reduction} to generalize \Cref{thm:sequence} to general proper loss functions.
Our proof of \Cref{thm:sequence} relies on the basic fact that the squared loss is bounded in $[0,1]$, but many proper loss functions including the cross-entropy loss do not have a similar boundedness property. Nevertheless, we can still generalize \Cref{thm:sequence} by making a weaker assumption about the loss function. For the cross-entropy loss $\xent$ and its corresponding dual loss $\ell\sps \psi$ (the logistic loss), the following holds if we choose $t_0 = 0$ and $B = \ln 2$:
\begin{equation}
\label{eq:init}
\ell\sps \psi(y,t_0) \le \ell\sps \psi(y,t) + B \quad \text{for every }y\in \{0,1\} \text{ and } t\in \R,
\end{equation}
because $\ell\sps \psi(y,t_0) = \ln 2$ when $t_0 = 0$, and $\ell\sps \psi(y,t) \ge 0$ for every $y\in \{0,1\}$ and $t\in \R$. For a general dual loss $\ell\sps\psi$, the assumption that \eqref{eq:init} holds for some $t_0\in \R$ and $B\in \R_{\ge 0}$ is a weaker assumption than boundedness.

Assuming \eqref{eq:init}, we can extend our main result \Cref{thm:sequence} to general proper loss functions in \Cref{thm:sequence-proper} below. For a distribution $\mD$ over $\X\times \{0,1\}$ and a class $\mG$ of dual predictors $g:\X\to \R$, we say $g\in \mG$ is $\varepsilon$-loss-optimal w.r.t.\ a dual loss $\ell\sps\psi$ if
\[
\E\nolimits_{(x,y)\sim \mD}[\ell\sps \psi(y,g(x))] \le  \inf_{g'\in \mG}\E\nolimits_{(x,y)\sim \mD}[\ell\sps \psi(y,g'(x))]+ \varepsilon.
\]
\vspace{-1\baselineskip}
\begin{theorem}[Generalization of \Cref{thm:sequence} to any proper loss]
\label{thm:sequence-proper}
Let $\mD$ be a distribution over $\X\times \{0,1\}$. 
Let $\psi:\R\to \R$ be a differentiable function satisfying $\nabla \psi(t)\in [0,1]$ for every $t\in \R$. For $\lambda \ge 0$, assume that $\psi$ is $\lambda$-smooth. Assume that \eqref{eq:init} holds for some $t_0\in \R$ and $B\in \R_{\ge 0}$.
Let $\mC$ be a class of functions $c:\X\times \R\to [-1,1]$.
Let $\mG_0,\mG_1,\ldots$ be families of dual predictors $g:\X\to \R$ satisfying $\mG_0\subseteq \mG_1 \subseteq \cdots$ and that the constant function $g(x) = t_0$ belongs to $\mG_0$.
For some positive integer $k$, assume that for every nonnegative integer $n$, every $g\in \mG_n$, every $c\in \mC$, and every $\beta\in [-1/\lambda,1/\lambda]$, the composed dual predictor $g'$ defined below satisfies $g'\in \mG_{n+k}$:
\begin{equation}
g'(x) = g(x) + \beta c(x,g(x)) \quad \text{for every }x\in \X.
\end{equation}
Then for every $\alpha > 0$, for all but at most $Bk/\alpha$ choices of $n\in \Z_{\ge 0}$, for any $\varepsilon > 0$, every $\varepsilon$-loss-optimal $g\in \mG_n$ w.r.t.\ $\ell\sps\psi$ is $\big(\mC, \sqrt{2\lambda (\alpha + \varepsilon)}\big)$-MC.
\end{theorem}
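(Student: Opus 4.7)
The plan is to mirror the proof of \Cref{thm:sequence}, replacing its two key ingredients with their proper-loss analogs: use \Cref{thm:proper-reduction} in place of \Cref{lem:standard} for the per-step loss drop, and use assumption \eqref{eq:init} in place of the trivial $[0,1]$ bound on the squared loss to bound the total loss drop across all $n$.

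Concretely, define $\opt_n := \inf_{g \in \mG_n}\E_{(x,y)\sim\mD}[\ell\sps\psi(y,g(x))]$. Since $\mG_0 \subseteq \mG_1 \subseteq \cdots$, the sequence $\opt_n$ is non-increasing. The constant dual predictor $g(x) = t_0$ lies in $\mG_0$, so $\opt_n \le \E[\ell\sps\psi(y,t_0)]$ for all $n$; on the other hand, \eqref{eq:init} gives $\E[\ell\sps\psi(y,t_0)] \le \opt_n + B$ for all $n$. Hence $\opt_0 - \opt_\infty \le B$. Now repeat the counting argument from \Cref{thm:sequence}: for each residue class $j \in \{1,\ldots,k\}$ modulo $k$, any chain $n_1 < \cdots < n_m$ of values in that class violating $\opt_{n_i} \le \opt_{n_i+k} + \alpha$ satisfies $B \ge \opt_{n_1} - \opt_{n_m+k} \ge m\alpha$, so $m \le B/\alpha$. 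Summing over the $k$ residue classes gives at most $Bk/\alpha$ "bad" values of $n$.

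For any "good" $n$ (one satisfying $\opt_n \le \opt_{n+k} + \alpha$) and any $\varepsilon$-loss-optimal $g \in \mG_n$, I would take an arbitrary $c \in \mC$ and set $\beta := \E[c(x,g(x))(y - f(x))]$, where $f(x) = \nabla\psi(g(x))$. Since $f(x)\in[0,1]$ and $y \in \{0,1\}$ and $c$ takes values in $[-1,1]$, we have $\beta \in [-1,1]$, so $\beta/\lambda \in [-1/\lambda, 1/\lambda]$. The closure assumption then guarantees that the updated dual predictor $g'(x) = g(x) + (\beta/\lambda)c(x,g(x))$ lies in $\mG_{n+k}$. Applying \Cref{thm:proper-reduction} to this $g'$ gives $\E[\ell\sps\psi(y,g'(x))] \le \E[\ell\sps\psi(y,g(x))] - \beta^2/(2\lambda)$, and chaining this with $\varepsilon$-loss-optimality of $g$ in $\mG_n$ and the goodness of $n$ yields
\[
\beta^2/(2\lambda) \le \E[\ell\sps\psi(y,g(x))] - \opt_{n+k} \le \opt_n + \varepsilon - \opt_{n+k} \le \alpha + \varepsilon,
\]
so $|\beta| \le \sqrt{2\lambda(\alpha+\varepsilon)}$, which is exactly the required MC inequality for $c$.

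The one subtlety, rather than an obstacle, is checking that the coefficient $\beta/\lambda$ used in the update indeed falls inside the range $[-1/\lambda, 1/\lambda]$ covered by the composition hypothesis; this is immediate from $|\beta| \le 1$, as noted above. Everything else is a mechanical transfer of the earlier argument, with the factor $1/(2\lambda)$ from the proper-loss reduction producing the $\sqrt{2\lambda}$ factor in the conclusion, and the bound $B$ from \eqref{eq:init} replacing the bound $1$ in the counting argument and thereby producing the $Bk/\alpha$ count of unlucky sizes.
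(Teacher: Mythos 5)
Your proof is correct and matches the paper's (sketched) argument exactly: replace Lemma~\ref{lem:standard} with Theorem~\ref{thm:proper-reduction} for the per-step drop, use \eqref{eq:init} together with $t_0 \in \mG_0$ to bound the total decrease of $\opt_n$ by $B$ instead of $1$, and run the identical residue-class counting and loss-chaining arguments. Your check that $|\beta| \le 1$ so that $\beta/\lambda \in [-1/\lambda, 1/\lambda]$ (needed to invoke the composition hypothesis) is the right one and is the only non-mechanical point.
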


\Cref{thm:sequence-proper} can be proved similarly to \Cref{thm:sequence}. Specifically, by \eqref{eq:init}, we have $\E[\ell\sps\psi(y,t_0)] \le \E[\ell\sps\psi(y,g(x))] + B$ for any dual predictor $g:\X\to \R$.
If we define $\opt_n$ to be the infimum of $\E[\ell\sps\psi(y,g(x))]$ over $g\in \mG_n$, we have 
\begin{equation}
\label{eq:opt-drop}
\opt_0 \le \E[\ell\sps\psi(y,t_0)] \le \opt_n + B
\end{equation}
by our assumption that the constant predictor $g(x) = t_0$ belongs to $\mG_0$. 
Therefore we have $B \ge \opt_0 - \opt_n$, bounding the total decrease of $\opt_n$ as a non-increasing function of $n$. We can combine this bound with \Cref{thm:proper-reduction} to prove \Cref{thm:sequence-proper}. We omit the details.

Using \Cref{thm:sequence-proper}, we can prove a result for neural networks generalizing \Cref{thm:informal-main} to any proper loss. We use $\widetilde \NN_n$ to denote the family of all functions $f:\X\to \R$ computable by an $n$-node feed-forward network with ReLU activations. We use $\widetilde \NN_k^*$ to denote the family of all functions $c:\X\times \R\to [-1,1]$ computable by a $k$-node feed-forward network with ReLU activations. To make a negative output possible, we allow the output node of a network in $\widetilde \NN_n$ and $\widetilde \NN_k^*$ to drop the ReLU transformation.

\begin{theorem}[Generalization of \Cref{thm:informal-main} to any proper loss]
\label{thm:proper-main}
    Let $\mD$ be a distribution over $\X\times \{0,1\}$. 
    Let $\psi:\R\to \R$ be a differentiable function satisfying $\nabla \psi(t)\in [0,1]$ for every $t\in \R$. For $\lambda \ge 0$, assume that $\psi$ is $\lambda$-smooth. Assume that \eqref{eq:init} holds for some $t_0\in \R$ and $B\in \R_{\ge 0}$.
    Then for every $k\in \Z_{> 0}$ and every $\alpha > 0$, for all but at most $B(k + 1)/\alpha$ choices of $n\in \Z_{> 0}$, for any $\varepsilon\in (0,1)$, every $\varepsilon$-loss-optimal $g\in \widetilde\NN_n$ w.r.t.\ $\ell\sps\psi$ is $\Big(\widetilde\NN_k^*, \sqrt{2\lambda (\alpha + \varepsilon)}\Big)$-MC.
\end{theorem}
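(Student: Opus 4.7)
The plan is to invoke \Cref{thm:sequence-proper} with the families $\mG_n := \widetilde{\NN}_{n+1}$ (for $n\in \Z_{\ge 0}$) and auditor class $\mC := \widetilde{\NN}_k^*$, then re-index $n \leftrightarrow n' + 1$ at the end. The shift by $1$ is needed because $\widetilde{\NN}_0$ does not in general contain the constant $t_0$, whereas a single node with zero input weights, bias $t_0$, and output ReLU dropped always does, placing $t_0 \in \widetilde{\NN}_1 = \mG_0$.

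The two hypotheses of \Cref{thm:sequence-proper} that need to be checked are: (i) the monotone chain $\mG_0 \subseteq \mG_1 \subseteq \cdots$ with $t_0 \in \mG_0$, and (ii) closure under the composition $g \mapsto g + \beta c(\cdot,g(\cdot))$ with a specified size increment. Point (i) follows from the remark above together with padding by identity passthrough nodes. For (ii), given $g \in \widetilde{\NN}_{n+1}$, $c \in \widetilde{\NN}_k^*$, and $\beta \in [-1/\lambda, 1/\lambda]$, I construct a network for $g'(x) := g(x) + \beta c(x, g(x))$ by stacking the $n+1$ nodes of $g$, the $k$ nodes of $c$ (fed $(x, g(x))$), and one additional output node (with its ReLU dropped) computing the affine combination $g(x) + \beta c(x, g(x))$. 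This gives $g' \in \widetilde{\NN}_{n+k+2} = \mG_{n+k+1}$, so the composition increment is $k+1$ --- one less than the $k+2$ of \Cref{lm:composition}, because here we no longer need the pair of ReLUs that implemented $\clip(\cdot)$.

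With both hypotheses verified, \Cref{thm:sequence-proper} yields that for all but at most $B(k+1)/\alpha$ choices of $n' \in \Z_{\ge 0}$, every $\varepsilon$-loss-optimal $g \in \mG_{n'}$ with respect to $\ell\sps\psi$ is $\bigl(\mC, \sqrt{2\lambda(\alpha+\varepsilon)}\bigr)$-MC; substituting $n = n' + 1 \in \Z_{> 0}$ recovers the statement of \Cref{thm:proper-main}. The only subtle point, and the main obstacle in the argument, lies in the composition construction: the output nodes of both $g$ and $c$ are permitted to drop their ReLU transformations (since their codomains include negative values), but once they are placed inside the larger network they become internal, where the definition forbids dropping ReLU. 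This is resolved exactly as in the footnote to \Cref{lm:composition}: any affine-only internal node can be absorbed into the linear map of each of its successors without changing the computed function. This bookkeeping is the only nontrivial piece of the argument and is otherwise routine.
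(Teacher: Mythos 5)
Your proposal is correct and matches the approach the paper intends (the paper itself omits the details, saying only that the proof is analogous to \Cref{thm:informal-main}). Reindexing with $\mG_{n'} := \widetilde\NN_{n'+1}$ to ensure the constant $t_0\in\mG_0$, taking $\mC = \widetilde\NN_k^*$, verifying the composition increment $k+1$ via the stacking construction plus the absorption trick from the footnote to \Cref{lm:composition}, and then applying \Cref{thm:sequence-proper} is exactly the right chain of steps, and it reproduces the claimed bound $B(k+1)/\alpha$ over $n\in\Z_{>0}$ and the MC radius $\sqrt{2\lambda(\alpha+\varepsilon)}$.
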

The proof of \Cref{thm:proper-main} is analogous to \Cref{thm:informal-main}. We can similarly generalize \Cref{thm:upper-bound,thm:sequence-reg,thm:reg} to any proper loss. We omit the details.
\end{document}